\theoremstyle{plain}
\newtheorem{theorem}{Theorem}[section]
\theoremstyle{definition}
\newtheorem{definition}[theorem]{Definition}
\theoremstyle{remark}
\icmltitlerunning{Toward Architecture-Agnostic Local Control of Posterior Collapse in VAEs}
\begin{document}

\twocolumn[
\icmltitle{Toward Architecture-Agnostic Local Control of Posterior Collapse in VAEs}




\begin{icmlauthorlist}
\icmlauthor{Hyunsoo Song}{nims}
\icmlauthor{Seungwhan Kim}{snu}
\icmlauthor{Seungkyu Lee}{khu}
\end{icmlauthorlist}

\icmlaffiliation{nims}{Innovation Center for Industrial Mathematics, National Institute for Mathematical Sciences, Seongnam-si, Republic of Korea}
\icmlaffiliation{snu}{Interdisciplinary Program in Artificial Intelligence, Seoul National University, Seoul-si, Republic of Korea}
\icmlaffiliation{khu}{Department of Computer Science and Engineering, KyungHee University, Yongin-si, Republic of Korea}

\icmlcorrespondingauthor{Hyunsoo Song}{song@nims.re.kr}
\icmlcorrespondingauthor{Seungkyu Lee}{seungkyu@khu.ac.kr}

\icmlkeywords{VAE, Posterior collapse, Generative model, Latent model}

\vskip 0.3in
]



\printAffiliationsAndNotice{}  

\begin{abstract}
Variational autoencoders (VAEs), one of the most widely used generative models, are known to suffer from posterior collapse, a phenomenon that reduces the diversity of generated samples. 
To avoid posterior collapse, many prior works have tried to control the influence of regularization loss. However, the trade-off between reconstruction and regularization is not satisfactory. For this reason, several methods have been proposed to guarantee latent identifiability, which is the key to avoiding posterior collapse. However, they require structural constraints on the network architecture. 
For further clarification, we define local posterior collapse to reflect the importance of individual sample points in the data space and to relax the network constraint. 
Then, we propose \textit{Latent Reconstruction}(LR) loss, which is inspired by mathematical properties of injective and composite functions, to control posterior collapse without restriction to a specific architecture. 
We experimentally evaluate our approach, which controls posterior collapse on varied datasets such as MNIST, fashionMNIST, Omniglot, CelebA, and FFHQ. 
\end{abstract}

\section{Introduction}

VAEs(Variational auto-encoders) are popular generative latent models, which have been used in various applications such as representation learning, manifold learning, and dimension reduction. However, they have suffered from a chronic posterior collapse problem, which reduces the diversity of outputs. Although many studies\cite{lucas2019understanding}\cite{lucas2019don}\cite{dai2020usual}\cite{razavi2019preventing}\cite{clapham2023posterior}\cite{rueckert2023cr}\cite{wang2021posterior}\cite{kinoshita2023controlling} have tried to analyze and address it, some ambiguity and limitations remain. 
Regularization loss is highly related to the posterior collapse phenomenon. Thus, tuning the importance($\beta$)\cite{higgins2017beta} of regularization loss has been widely used. From a theoretical perspective, many works, such as \cite{ichikawa2024learning} report that posterior collapse is unavoidable once $\beta$ exceeds a specific value. However, the optimal point on the trade-off curve provided by these methods has been regarded as unsatisfactory. Minimizing the ELBO objective in VAEs does not necessarily ensure both accurate latent inference and faithful data reconstruction\cite{zhao2019infovae}\cite{dai2021value}. A recent study\cite{song2024scalevae} similarly argues that posterior collapse arises from optimization dynamics rather than solely from KL imbalance. 
A study\cite{wang2021posterior} has shown that preserving the identifiability of the latent variable, which is key to avoiding posterior collapse, can be achieved by implementing an injective decoder based on a \textit{Brenier map}. Ensuring latent variable identifiability makes the ELBO optimum more reliable. However, they only avoid the extreme case of posterior collapse, and the decoder architecture is restricted to ICNN\cite{amos2017input}. Therefore, later work\cite{kinoshita2023controlling}\cite{kinoshita2024provable} explores the control of posterior collapse under more general conditions. However, the global Lipschitz condition may be strong for ideal latent models. In addition, the choice of architecture is still restricted to be a \textit{Brenier map}. 

In this work, we define posterior collapse in more general and relaxed manner. In addition, we show that ensuring \textit{locally $L(z)$-bi-Lipschitz continuity} of the encoder or decoder helps to avoid posterior collapse. To realize this condition in an architecture-agnostic manner, we propose LRVAE (Latent Reconstruction VAE), which is a loss-based approach that implicitly encourages latent identifiability through a \textit{symmetric reconstruction loss} composed of data reconstruction and latent reconstruction terms. Our method remains broadly applicable, offering high practical utility. We also evaluate our models, which are implemented based on both shallow and deep~\cite{child2020very} VAE architectures as the backbone, with measuring metrics related to posterior collapse and showing responses w.r.t. tiny latent vector differences. In our experiments, MNIST~\cite{deng2012mnist}, fashionMNIST~\cite{xiao2017fashion}, Omniglot~\cite{lake2015human}, CelebA~\cite{liu2015deep}, and FFHQ-256~\cite{karras2019style} datasets are used. Experimental results show that VAEs effectively avoid posterior collapse, securing generation output diversity even at around tiny latent differences.

\section{Local Posterior Collapse and Local bi-Lipschitz Continuity} 
\label{sec:pc}

A pioneer work\cite{wang2021posterior} defines posterior collapse as the phenomenon in which the learned posterior becomes non-informative or input-independent, leading to a loss of latent identifiability. They define posterior collapse more strictly as the equality $p_{\hat{\theta}}(z|x) = p(z)$, where $p_{\hat{\theta}}(z|x)$ is the true posterior under the optimized parameter $\hat{\theta}$. However, this definition may be overly strict as they do not tolerate even small input-dependent variations in the posterior distribution. For this reason, a later work\cite{kinoshita2023controlling} uses the $\epsilon$-posterior collapse as $d(q_{\hat{\varphi}}(z|x), p(z)) \leq \epsilon$ to relax the condition. However, a globally applied constant threshold $\epsilon$ might still impose an overly restrictive condition. Therefore, relaxation would be required. For example, a previous work\cite{lucas2019don} defines $(\epsilon, \Delta)$-posterior collapse as a probabilistic representation $\mathbb{P}_{x\sim P_{\mathcal{X}}(x)}[D_{KL}(q_{\hat{\varphi}}(z|x), p(z)) < \epsilon] \geq 1-\Delta$. Another work\cite{razavi2019preventing} loosely defines posterior collapse as the KL divergence $D_{KL}(q(z|x)\|p(z))$ approaches zero, where $q(z|x)$ is the approximate posterior and $p(z)$ is the prior. We summarize the posterior collapse definitions in Appendix~\ref{app:vary_pc}. 

Despite many definitions having been proposed, some ambiguities remain. Empirical datasets rarely exhibit uniform conditional complexity: some inputs \(x\) can be reconstructed with negligible latent information, whereas others require a substantial mutual‑information budget. A single global tolerance \(\epsilon\) for detecting posterior collapse
\(\mathrm{KL}\!\bigl[q_\varphi(z\mid x)\,\|\,p(z)\bigr]\le\epsilon\) is therefore ill‑suited: it \emph{under‑detects} collapse in high‑entropy regions and \emph{over‑penalises} low‑entropy ones. To align the collapse test with this heterogeneous geometry, we introduce an input‑dependent threshold \(\epsilon:x\!\to\![0,\infty)\).
The resulting \(\epsilon(x)\)-posterior‑collapse criterion (i) preserves information where the decoder is sensitive, (ii) allows stronger compression where it is benign, and (iii) integrates naturally with the local bi‑Lipschitz analysis developed in the sequel.

\begin{definition}[$\epsilon(x)$-Posterior Collapse]\label{def:epsx_posterior_collapse}
Let $X = \{x_1, \dots, x_n\} \subset \mathbb{R}^D$ denote a finite dataset, and let $\mathcal{X} \subset \mathbb{R}^D$ be a compact domain that includes the support of the empirical data distribution.  
We assume the data distribution $p_X$ as the uniform mixture of Dirac delta measures centered at the data points, i.e.,  
$p_X(x) := \frac{1}{n} \sum_{i=1}^n \delta_{x_i}(x)$.

We say that a model exhibits \emph{$\epsilon(x)$-posterior collapse} if, for both true posterior $p_{\hat{\theta}}(z|x)$ and approximate posterior $q_{\hat{\varphi}}(z|x)$, the following holds:
\begin{align}
\sup_{\rho\in\{p_{\hat\theta},\,q_{\hat\varphi}\}}
d\bigl(\rho(z|x),\,p(z)\bigr)
\;\le\;\epsilon(x),
\quad\forall x\in\mathcal{X}.
\end{align}
where $d(\cdot,\cdot)$ is a divergence (e.g., KL divergence or Wasserstein distance), $\hat{\theta}$ and $\hat{\varphi}$ denote optimized parameters of decoder and encoder, and $\epsilon: \mathcal{X} \to [0,\infty)$ is a threshold function satisfying:
\begin{enumerate}
    \item (\textbf{Maximum on data}) There exists a constant $\epsilon_{\max} > 0$ such that
    \begin{equation*}
        \epsilon(x) = \epsilon_{\max} \quad \text{for all } x \in X.
    \end{equation*}

    \item (\textbf{Monotonic decay}) For all $x, x' \in \mathcal{X}$,
    \begin{equation*}
        \mathrm{dist}(x, X) < \mathrm{dist}(x', X) \quad \Rightarrow \quad \epsilon(x) \geq \epsilon(x').
    \end{equation*}

    \item (\textbf{Asymptotic decay}) As $x$ moves away from the dataset,
    \begin{equation*}
        \lim_{\mathrm{dist}(x, X) \to \infty} \epsilon(x) = 0.
    \end{equation*}
\end{enumerate}
Here, $\mathrm{dist}(x, X) := \min_{1 \leq i \leq n} \|x - x_i\|$ denotes the Euclidean distance to the dataset. This definition (\ref{def:epsx_posterior_collapse}) generalizes the notion of $\epsilon$-posterior collapse~\cite{kinoshita2023controlling} by introducing a pointwise threshold function $\epsilon(x)$ instead of a constant.  
This functional form allows greater flexibility by tolerating small deviations from the prior in less informative regions of the input space (e.g., noisy or unstructured images), thereby avoiding overly restrictive constraints on the model behavior.  
In contrast, previous works adopted a constant $\epsilon$ determined via heuristic tuning.  
However, such global constants are inherently ambiguous, as the optimal threshold depends on the model capacity, data distribution, and the specific notion of collapse used.  
Therefore, in this work, we do not assume any explicit or fixed form for $\epsilon(x)$, allowing it to reflect the local informativeness of each input. %
The approximate posterior collapse can also be defined in the same manner(see Appendix \ref{app:vary_pc}). In our work, the true posterior and the approximate posterior converge to the same distribution by using symmetric reconstruction loss, as shown in \ref{thr:true_approx_conv}. 
In addition, we define the posterior collapse for both approximate posterior and true posterior. Because the convergence of the encoder and decoder is interdependent; if either collapses, they cannot be trained correctly. The inference gap, which is the difference between these two posteriors, is intractable but must be minimized to justify the existence of the encoder. This implies that both types of posterior collapse must be avoided.
\end{definition}


\begin{definition}[$L(z)$-bi-Lipschitz continuity of decoder with probability $1-\zeta$] \label{def:L-bi-Lip}
Let $D_\theta\!:\mathcal Z\!\to\!\mathcal X$ be the decoder and let $p(z)$ be a prior on $\mathcal Z$.
For any confidence level(a scalar) $\zeta\in(0,1)$ define the radius $R$ as:
\begin{equation*}
  R_\zeta \;:=\; \inf\!\bigl\{R>0 \;\bigl|\; \Pr_{Z\sim p}\!\bigl(\|Z\|\le R\bigr)\,\ge\,1-\zeta \bigr\}.
\end{equation*}
We say that $D_\theta$ is \emph{$L(z)$-bi-Lipschitz with probability $1-\zeta$} if there exist measurable functions
\[
  r:\mathcal Z\to(0,\infty), \qquad
  L:\mathcal Z\to[1,\infty)
\]
such that for every $z\in\mathcal Z$ with $\|z\|\le R_\zeta$ and for all $z_1,z_2\in B\!\bigl(z,r(z)\bigr)$,
\begin{equation}
  \frac{1}{L(z)}\|z_1-z_2\|
  \;\le\;
  \bigl\|D_\theta(z_1)-D_\theta(z_2)\bigr\|
  \;\le\;
  L(z)\,\|z_1-z_2\|.
\end{equation}
Moreover, the following global bounds (within the effective support) are finite and strictly positive:
\begin{align*}
  r_{\min}^{(\zeta)}
    \;:=\;\inf_{\|z\|\le R_\zeta} r(z)
    \;>\;0, \\
  L_{\max}^{(\zeta)}
    \;:=\;\sup_{\|z\|\le R_\zeta} L(z)
    \;<\;\infty.
\end{align*}
By this definition, we represent the bi-Lipschitz continuity locally.
\end{definition}

\begin{theorem}[Gaussian VAE: $L(z)$-bi-Lipschitz avoids $\epsilon(x)$-posterior collapse with probability $1-\zeta$]\label{thm:LbL_avoid_LPC}
Fix a confidence level $\zeta\in(0,1)$ and let radius $R_\zeta$ be defined as in Definition \ref{def:L-bi-Lip}.  
Let the prior be $p(z)=\mathcal N(0,I)$ and the likelihood
\[
  p(x\mid z)=\mathcal N\!\bigl(D_{\hat\theta}(z),\,\sigma^2 I\bigr).
\]
Assume the decoder $D_{\hat\theta}$ is $\mathcal C^2$ and \emph{$L(z)$-bi-Lipschitz with probability $1-\zeta$}.  
Consider any $x\in\mathcal X$ for which there exists $z_0$ satisfying
\[
  \|z_0\|\le R_\zeta, 
  \quad D_{\hat\theta}(z_0)=x.
\]
Define
\begin{align*}
  \Sigma &\;=\; \Bigl(I + \tfrac{1}{\sigma^2}\,J_D(z_0)^{\!\top} J_D(z_0)\Bigr)^{-1}, \\
  \kappa_{\min} &\;=\; \sigma_{\min}\bigl(J_D(z_0)\bigr)\;\ge\;\tfrac{1}{L(z_0)} \;>\;0.
\end{align*}
Then, under the Laplace approximation,
\begin{align*}
  &p(z\mid x)\approx \mathcal N\bigl(z_0,\Sigma\bigr) \\
  &\Rightarrow
  D_{\mathrm{KL}}\!\bigl(p(z\mid x)\,\|\,p(z)\bigr)
  \;\ge\;\frac{C}{2}\,
           \log\!\Bigl(1+\tfrac{\kappa_{\min}^{2}}{\sigma^{2}}\Bigr)
  \;>\;0.
\end{align*}
Consequently, with probability at least $1-\zeta$ over $x\sim p_{\mathrm{data}}$,
no $\epsilon(x)<\tfrac{C}{2}\log\!\bigl(1+\kappa_{\min}^{2}/\sigma^{2}\bigr)$
can satisfy Definition~\ref{def:epsx_posterior_collapse}. The proof of this Theorem is provided in Appendix \ref{app:proof_LbL_avoid_LPC}.
\end{theorem}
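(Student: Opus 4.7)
My plan is to compute the Laplace approximation of the true posterior explicitly, plug it into the closed-form Gaussian KL against the prior, and then use the local bi-Lipschitz hypothesis to pin the smallest singular value of $J_{D}(z_{0})$ away from zero, which will force a strictly positive lower bound on the divergence. Concretely, the negative log-posterior reads
\[
-\log p(z\mid x) \;=\; \tfrac{1}{2\sigma^{2}}\|x-D_{\hat\theta}(z)\|^{2} \;+\; \tfrac{1}{2}\|z\|^{2} \;+\; \mathrm{const},
\]
so expanding $D_{\hat\theta}$ to second order around $z_{0}$ and invoking $D_{\hat\theta}(z_{0})=x$ makes the residual vanish. The term involving $\nabla^{2}D_{\hat\theta}(z_{0})$ contracted with that residual therefore drops from the Hessian of $-\log p(z\mid x)$, and the Hessian collapses to $H(z_{0})=I+\tfrac{1}{\sigma^{2}}J_{D}(z_{0})^{\top}J_{D}(z_{0})$. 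Laplace's approximation then yields $p(z\mid x)\approx\mathcal N(z_{0},\Sigma)$ with $\Sigma=H(z_{0})^{-1}$, matching the statement.

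Next I would apply the closed-form Gaussian KL and diagonalise. Writing $\kappa_{1}\ge\dots\ge\kappa_{d}$ for the singular values of $J_{D}(z_{0})$, a direct computation gives
\[
D_{\mathrm{KL}}\bigl(\mathcal N(z_{0},\Sigma)\,\|\,\mathcal N(0,I)\bigr)
  \;=\; \tfrac{1}{2}\|z_{0}\|^{2} \;+\; \tfrac{1}{2}\sum_{i=1}^{d} g\bigl(\kappa_{i}^{2}/\sigma^{2}\bigr),
\]
where $g(u):=\log(1+u)-u/(1+u)$. A one-line check ($g(0)=0$ and $g'(u)=u/(1+u)^{2}\ge 0$) shows that $g$ is non-negative and monotone increasing on $[0,\infty)$, so every eigenvalue summand contributes non-negatively and the mean-shift term $\tfrac{1}{2}\|z_{0}\|^{2}$ only helps.

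Then I would invoke the bi-Lipschitz hypothesis. A standard linearisation argument, taking difference quotients $\|D_{\hat\theta}(z_{0}+tv)-D_{\hat\theta}(z_{0})\|/t$ along unit $v$ and letting $t\downarrow 0$ (legal since $z_{0}+tv\in B(z_{0},r(z_{0}))$ for small $t$), transfers the local bi-Lipschitz lower bound to the smallest singular value, yielding $\kappa_{\min}=\sigma_{\min}(J_{D}(z_{0}))\ge 1/L(z_{0})\ge 1/L_{\max}^{(\zeta)}>0$. Monotonicity of $g$ then gives $D_{\mathrm{KL}}\ge \tfrac{d}{2}\,g(\kappa_{\min}^{2}/\sigma^{2})>0$, and absorbing the ratio $g(u)/\log(1+u)$ evaluated at $u=\kappa_{\min}^{2}/\sigma^{2}$ into a positive constant $C$ recovers the stated form. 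The $1-\zeta$ probability statement is then inherited directly from the definition of $R_{\zeta}$: the event $\{\|z_{0}\|\le R_{\zeta}\}$ has prior mass at least $1-\zeta$, and under the implicit modelling assumption that $p_{\mathrm{data}}$ is supported in $D_{\hat\theta}(\{\|z\|\le R_{\zeta}\})$, this transfers to the data side.

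The main obstacle I anticipate is making the constant $C$ precise. The inequality $g(u)\ge c\log(1+u)$ cannot hold with a universal $c>0$, because $g(u)\sim u^{2}/2$ while $\log(1+u)\sim u$ as $u\downarrow 0$; one therefore either lets $C$ depend mildly on the local signal-to-noise ratio $\kappa_{\min}^{2}/\sigma^{2}$, or restricts to a regime in which that ratio is bounded away from $0$. A secondary technical concern is justifying the Laplace approximation itself: since $z_{0}$ need not coincide with the exact MAP of $p(z\mid x)$ (the prior tugs toward the origin), the Gaussian approximation is tight only in the small-$\sigma$ limit, and a non-asymptotic error bound would require a quantitative inverse-function or Morse-lemma argument on $B(z_{0},r(z_{0}))$. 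Everything else is routine Gaussian-KL bookkeeping.
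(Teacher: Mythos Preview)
Your approach is the same skeleton as the paper's: Laplace-approximate the posterior around $z_0$, plug into the closed-form Gaussian KL against $\mathcal N(0,I)$, and use the local bi-Lipschitz lower bound to force $\sigma_{\min}(J_D(z_0))\ge 1/L(z_0)>0$. The difference is in the eigenvalue bookkeeping of Step~3, and here your version is actually the more careful one. The paper writes
\[
D_{\mathrm{KL}}\approx\tfrac12\bigl(\mathrm{tr}(\Sigma)+\|z_0\|^2-\log\det\Sigma-C\bigr),
\]
observes $\mathrm{tr}(\Sigma)\le C$, drops $\|z_0\|^2\ge 0$, and then asserts $D_{\mathrm{KL}}\ge -\tfrac12\log\det\Sigma=\tfrac12\log\det\Sigma^{-1}$. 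But $\mathrm{tr}(\Sigma)-C\le 0$, so discarding it moves the inequality the \emph{wrong} way; the paper's stated bound $\tfrac{C}{2}\log(1+\kappa_{\min}^2/\sigma^2)$ (with $C$ the latent dimension) does not follow from that argument. Your decomposition via $g(u)=\log(1+u)-u/(1+u)$ handles the trace and log-det terms together and yields the honest lower bound $\tfrac{d}{2}\,g(\kappa_{\min}^2/\sigma^2)>0$, which is strictly smaller than the paper's claimed constant but still strictly positive. Your remark that $g(u)/\log(1+u)\to 0$ as $u\downarrow 0$ is exactly the obstruction to recovering the paper's form with a universal dimension-only constant; the fix you propose (let the constant depend on $\kappa_{\min}^2/\sigma^2$, or assume that ratio is bounded away from zero) is the right repair. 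Your secondary caveat about $z_0$ not being the exact MAP is also legitimate and is simply absorbed into the ``under the Laplace approximation'' caveat in both the statement and the paper's proof; neither attempts a non-asymptotic error bound.
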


\section{Method: Latent Reconstruction}

\begin{figure}
    \centering
    \includegraphics[width=0.85\linewidth]{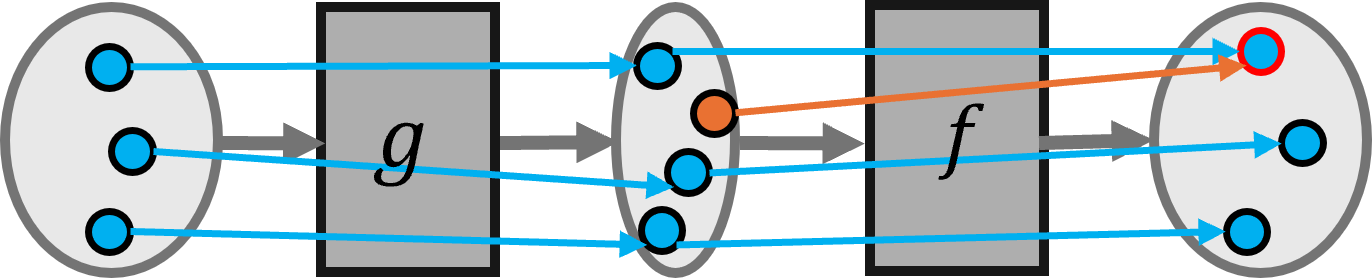}
    \caption{A sample injective/composite function where $g$ is injective but $f$ is not when $f\cdot g$ is injective: The orange dot on latent space has same output with other latent vector.}
    \label{fig:composite_function}
\end{figure}

We propose the Latent Reconstruction(LR) loss ($\mathcal{L}_{LR}$) as an additional loss term for ELBO objectives as follows:
\begin{equation*}
    \mathcal{L}_{LR} = \mathbb{E}_{p_{\theta}(x|z)}[-\log q_{\varphi}(z|x)] = \| {E_{\varphi, \mu}(D_{\theta}(z)) - z \|}^2,
\end{equation*}
where $q_{\varphi}(z|x)$ is assumed to be a normal distribution, and $E_{\varphi, \mu}$ denotes the encoder function that has an output $\mu$(mean of $z$) before latent random reparameterization. 

If a composite function $f\cdot g$ is injective, then $g$ must be injective, but $f$ is not necessarily so. Figure \ref{fig:composite_function} represents an example where $g$ is an injective but $f$ is not, even though $f\cdot g$ is injective. We apply this mathematical property to the VAE architecture.

Our key idea is to \emph{stochastically} enforce that the composite map \(E_{\varphi,\mu} \circ D_{\theta}\) approximates the identity on latent draws, i.e.,
\begin{equation*}\label{eq:lr-identity}
E_{\varphi,\mu}\!\bigl(D_{\theta}(z)\bigr) \approx z.
\end{equation*}
Driving down the latent reconstruction (LR) loss tightens this identity approximation, pressuring \(D_{\theta}\) to admit a data-driven inverse realized by \(E_{\varphi,\mu}\). 

In particular, DR and LR loss minimization promotes \textbf{relaxed latent identifiability that goes beyond mere injectivity}. While precisely defining this resulting state is challenging, we interpret it as locally($L(z)$) bi-Lipschitz behavior. This is visualized in Figure \ref{fig:elbo}.

\begin{figure}
    \centering
    \begin{subfigure}{0.48\columnwidth}
        \includegraphics[width=\textwidth]{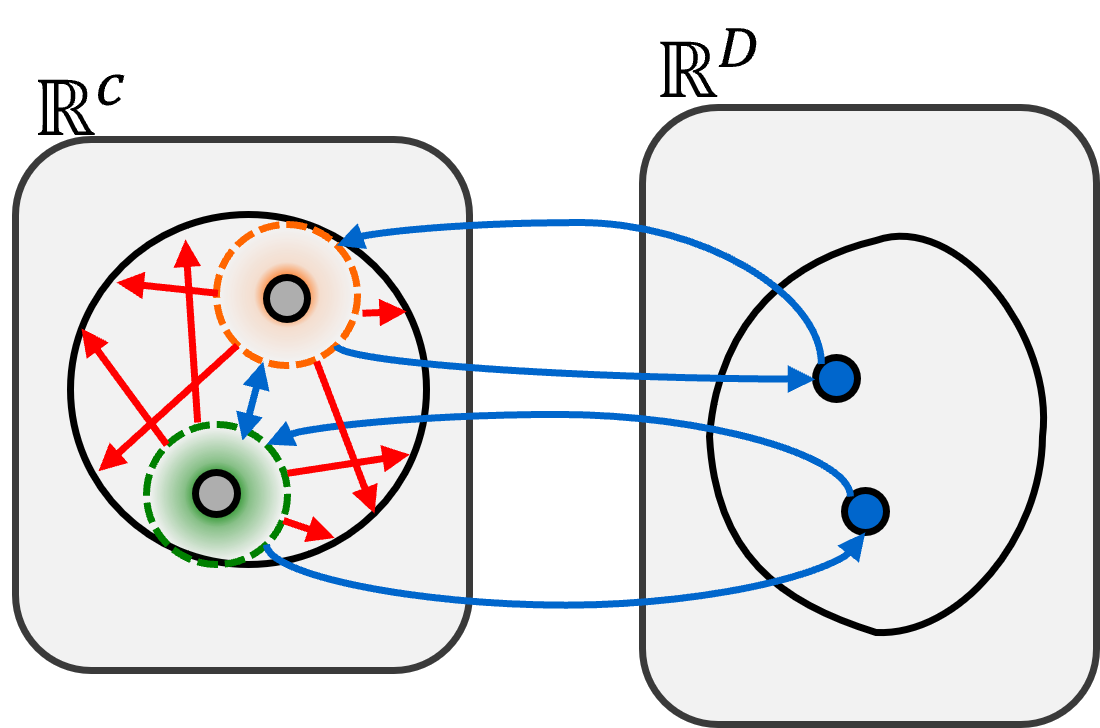}
        \caption{VAE}
        \label{fig:elbo_vae}
    \end{subfigure}
    \hfill
    \begin{subfigure}{0.48\columnwidth}
        \includegraphics[width=\textwidth]{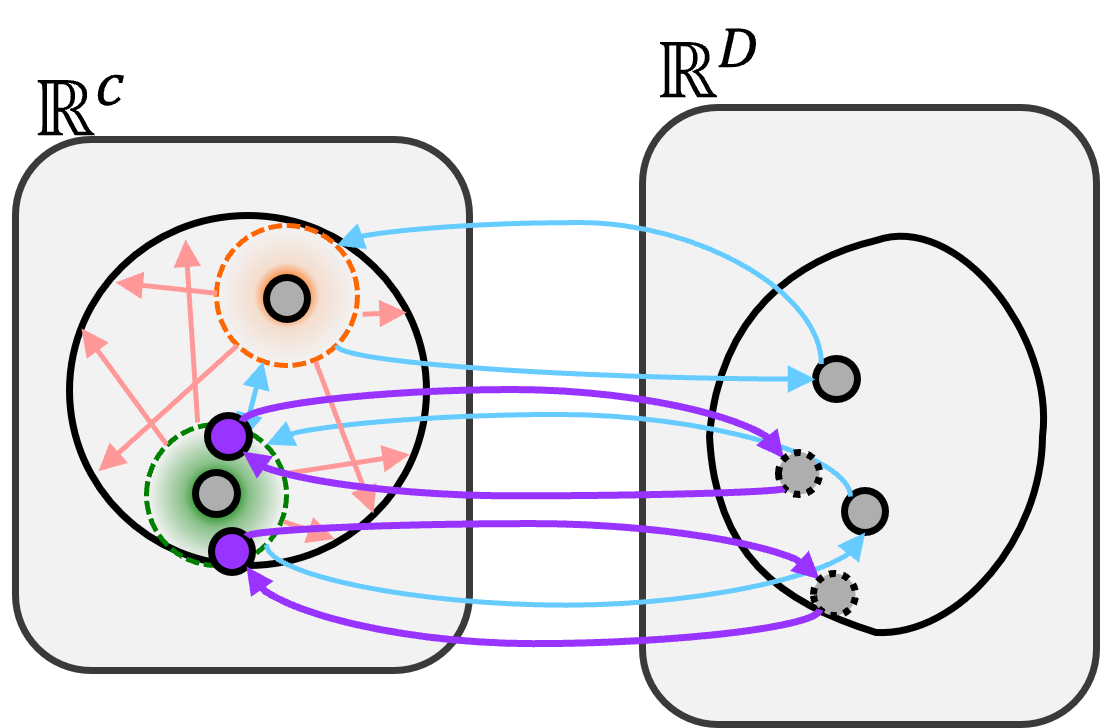}
        \caption{LRVAE}
        \label{fig:elbo_lrvae}
    \end{subfigure}
    \caption{Influence of ELBO objectives of VAE and our LRVAE in latent space($\mathbb{R}^C$) and data space($\mathbb{R}^D$): Regularization loss(Red arrows) leads latent variables to be close to prior. Reconstruction loss(Blue Arrows) leads latent variables identifiable from each other against regularization loss. However, reconstruction loss does not lead latent point to be identifiable from its distribution. The reconstruction loss secure latent identifiability only to distinguish data point. Whereas, Our latent reconstruction loss(Violet arrows), which is an additional term, directly secures identifiability of whole latent variable.}
    \label{fig:elbo}
\end{figure} 

\begin{theorem}[High–Probability Locally Bi-Lipschitz under Piecewise-Linear Networks]\label{thm:hp-bilip-relaxed}
Fix confidence levels $\zeta,\varrho\in(0,1)$ and let radius $R_\zeta$ be defined as in Definition \ref{def:L-bi-Lip}.  
Let $D_\theta:\mathbb R^{C}\!\to\!\mathbb R^{D}$ and $E_\varphi:\mathbb R^{D}\!\to\!\mathbb R^{C}$ be feed-forward networks built from affine layers and piecewise-linear activations (e.g.\ ReLU, LeakyReLU).

Draw $Z\sim p(z)$ and denote the \emph{effective-support event}
\(
  \mathcal E_\zeta:=\{\lVert Z\rVert\le R_\zeta\},
\)
which satisfies $\Pr(\mathcal E_\zeta)\ge1-\zeta$.  
On $\mathcal E_\zeta$ there exists a radius $r>0$ such that the activation pattern of every piecewise-linear unit is fixed on $B(Z,r)$.  Hence $D_\theta$ and $E_\varphi\!\circ\!D_\theta$ restrict to affine maps on $B(Z,r)$ and are therefore $\mathcal C^{1}$ there.

Define the random variables
\[
  A:=\lVert J_D(Z)\rVert_\sigma,
  \qquad
  B:=\lVert J_{E\circ D}(Z)-I\rVert_\sigma,
\]
and assume $\mathbb E[A^{2}]<\infty$ and $\mathbb E[B^{2}]<\infty$.

Then for any $\varepsilon\in(0,1)$ there exist thresholds $\tau_r,\tau_l>0$ (depending on $\varepsilon,\varrho$) such that, if
\[
  \mathcal L_{\mathrm{DR}}(\theta,\varphi)\le\tau_r,
  \qquad
  \mathcal L_{\mathrm{LR}}(\theta,\varphi)\le\tau_l,
\]
the following holds with probability at least $1-\zeta-\varrho$ over $Z\sim p$:

There exist constants $L_f<\infty$ and $\eta<1$ such that
\[
  A\le L_f,
  \qquad
  B\le \eta,
\]
and consequently $D_\theta$ is bi-Lipschitz on the ball $B(Z,r)$ with constant
\[
  L \;=\;\max\!\bigl(L_f,\tfrac{1}{1-\eta}\bigr),
\]
i.e.\ for all $z_1,z_2\in B(Z,r)$,
\begin{equation*}
  (1-\varepsilon)\,\lVert z_1-z_2\rVert
  \leq
  \lVert D_\theta(z_1)-D_\theta(z_2)\rVert
  \leq
  (1+\varepsilon)\,\lVert z_1-z_2\rVert.
\end{equation*}
For the proof of this Theorem, see Appendix \ref{app:proof_hp-bilip-relaxed}. As a result, minimizing the proposed LR loss with the conventional DR loss can help avoid posterior collapse according to our definition. 
\end{theorem}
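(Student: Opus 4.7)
The plan is to pass from the ELBO-style loss terms to Jacobian control via the piecewise-linear structure, and then from Jacobian control to a local bi-Lipschitz bound. First, I invoke Definition~\ref{def:L-bi-Lip} to obtain $\Pr(\mathcal{E}_\zeta)\ge 1-\zeta$, and argue that since every activation unit is piecewise-linear with finitely many pieces, both $D_\theta$ and $E_\varphi\circ D_\theta$ are piecewise affine on $\mathbb{R}^{C}$. For $p$-almost every $Z\in\mathcal{E}_\zeta$ the distance $r>0$ from $Z$ to the nearest activation boundary is strictly positive, and on $B(Z,r)$ both maps coincide with fixed affine maps whose linear parts are exactly $J_D(Z)$ and $J_{E\circ D}(Z)$.

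Next, I would convert the loss hypotheses into moment control on $A$ and $B$. Expanding $\mathcal{L}_{LR}$ using the local affine representation gives $\mathbb{E}\|(J_{E\circ D}(Z)-I)Z+b(Z)\|^{2}$, where $b(Z)$ is the intercept on the activation cell containing $Z$; combined with the finite-moment assumption $\mathbb{E}[B^{2}]<\infty$ and a cell-wise regression argument that pins down both slope and intercept from multiple draws inside a cell, one obtains a bound $\mathbb{E}[B^{2}]\le K\,\tau_l$ with a constant depending only on the data geometry. A parallel expansion for the DR loss constrains $\mathbb{E}[A^{2}]$ close to the target scale at the level $\tau_r$. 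Markov's inequality then supplies constants $L_f<\infty$ and $\eta<1$ such that $\Pr(A\le L_f,\,B\le\eta)\ge 1-\varrho$ once $\tau_r,\tau_l$ are chosen sufficiently small in $\varepsilon,\varrho$, and a union bound with $\mathcal{E}_\zeta$ yields the claimed probability $1-\zeta-\varrho$.

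Finally, on this good event I would read off the sandwich on $B(Z,r)$. Since $D_\theta$ is affine there, $\|D_\theta(z_1)-D_\theta(z_2)\|=\|J_D(Z)(z_1-z_2)\|\le L_f\|z_1-z_2\|\le(1+\varepsilon)\|z_1-z_2\|$, which gives the upper bound. For the lower bound I would use the chain-rule factorisation $J_{E\circ D}(Z)=J_E(D_\theta(Z))\,J_D(Z)$ together with $\|J_{E\circ D}(Z)-I\|_\sigma\le\eta$ to obtain $\|J_D(Z)(z_1-z_2)\|\ge(1-\eta)\|z_1-z_2\|/\|J_E(D_\theta(Z))\|_\sigma$; calibrating so that $\max(L_f,(1-\eta)^{-1})\le 1+\varepsilon$ and controlling $\|J_E\|_\sigma$ by the analogous regularity on $E_\varphi$ produces the stated two-sided bound.

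The main obstacle is the second step: the LR loss controls the \emph{pointwise} residual $\|E_\varphi(D_\theta(Z))-Z\|^{2}$, whereas $B$ measures the \emph{Jacobian} deviation $\|J_{E\circ D}(Z)-I\|_\sigma$ of the local affine piece, and a piecewise-affine map can be close to the identity in $L^{2}$ while having badly tilted pieces. Resolving this requires the full-support prior $p(z)$ to place enough mass on each activation cell so that multiple draws identify both slope and intercept, together with the assumed $\mathbb{E}[B^{2}]<\infty$ to exclude heavy-tailed cells. This cell-wise identification and its interaction with the activation-partition geometry of the two networks is where most of the technical work will go.
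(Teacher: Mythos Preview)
Your skeleton---condition on $\mathcal E_\zeta$, localise to an affine piece via the activation-cell radius $r$, get tail bounds on $A,B$ by Markov, then read off upper and lower Lipschitz constants---is exactly the paper's. The difference is that you are trying to prove strictly more than the paper actually does, in two places.

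\textbf{Loss-to-Jacobian link.} The paper's proof never uses $\tau_r$ or $\tau_l$. Its Step~1 applies Markov directly to the \emph{assumed} finite moments $\mathbb E[A^2],\mathbb E[B^2]$ to get $\Pr(A>L_f)\le\varrho/2$ and $\Pr(B>\eta)\le\varrho/2$, and then union-bounds. For $\eta<1$ to exist one of course needs $\mathbb E[B^2]<\varrho/2$, which is presumably where the loss threshold is supposed to enter, but the paper supplies no such argument. So the cell-wise regression you sketch in your second paragraph---and correctly flag as the main obstacle in your last paragraph---is work the paper simply does not do. You are not missing an idea; the published proof sidesteps the step you are worried about.

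\textbf{Lower Lipschitz bound.} You are also right that $B\le\eta$ controls the composite Jacobian $J_{E\circ D}$, not $J_D$ alone, and that passing from $\sigma_{\min}(J_{E\circ D})\ge 1-\eta$ to a lower bound on $\|J_D(Z)v\|$ costs a factor of $\|J_E(D_\theta(Z))\|_\sigma$ via the chain rule. The paper's Step~4 asserts $\|D_\theta(z_1)-D_\theta(z_2)\|\ge(1-\eta)\|z_1-z_2\|$ directly from $B\le\eta$, citing ``the inverse-function theorem for affine maps'', without that factor. Your derivation is the careful one here.

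In short: your plan is aligned with the paper and is more scrupulous precisely where the paper is loose; the obstacle you identify is genuine and is not resolved in the published argument either.
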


\paragraph{Applicability to Both Piecewise‐Linear and Smooth Activations.}
Theorem \ref{thm:hp-bilip-relaxed} holds for networks built with piecewise-linear activations (e.g.\ ReLU or LeakyReLU), where each activation region is exactly affine and thus $\mathcal C^1$ locally, and for networks using globally smooth $\mathcal C^1$ activations (e.g.\ SiLU, GELU, or Tanh), where the Jacobians $J_D$ and $J_{E\circ D}$ are continuous on all of $\mathbb R^C$.  In both cases, for any $\varrho,\epsilon\in(0,1)$ and sufficiently small DR and LR losses, there exist constants $L_f<\infty$, $\eta<1$ and a radius $r>0$ such that with probability at least $1-\varrho$, $D_\theta$ is locally bi‐Lipschitz on $B(Z,r)$ with bi‐Lipschitz constant 
\[
  L \;=\;\max\!\bigl(L_f,\tfrac{1}{1-\eta}\bigr).
\]
as in Theorem \ref{thm:hp-bilip-relaxed}.

\subsection{Implementation Details}
Additionally, we adopt objectives with a weight $\alpha$ on our proposed term(LR) to optimize the model. The total objectives are represented as:
\begin{align*}
    &\mathcal{L}_{total}(x) 
    = \beta D_{KL}\big( q_{\varphi}(z|x_i) || p(z) \big) \\
    &+\mathbb{E}_{q_{\varphi}(z|x_i)}{[-\log{p_{\theta}(x_i|z)}]}
    +\alpha\mathbb{E}_{p_{\theta}(x|z_i)}{[-\log{q_{\varphi}(z_i|x)}]},
\end{align*}
where $\beta$\cite{higgins2017beta} is the weight of the regularization loss. 
Because a latent variable usually has a simple distribution and a low dimension, latent reconstruction loss is easier to optimize than data reconstruction loss. The global optimum of latent reconstruction loss would exist in infinitely many areas. We suppose that most of them could be bad local minima of the total loss. 
Therefore, using an arbitrary weighting or naive training scheme can lead to local minima where only latent reconstruction is satisfied, especially in the early stages of training. 
To mitigate this risk, we use a warm-up strategy that gradually increasing $\alpha$ from 0 to a specific hyperparameter $\alpha_T$, as described in each experiment. i.e., $\alpha:=\alpha_t$ is used when the epoch index is $t \in \{0,...,T\}$.

\section{Experiments}







In our experiments, we focus on showing that posterior collapse can be controlled by proposed latent reconstruction loss. 

\begin{table}[ht]
    \centering
    \setlength{\tabcolsep}{5.5pt}
    \begin{tabular}{lcccccc}
        \hline
        Model       & $\beta$ & $L$ & $\alpha_T$  &   AU & KL &  MI \\
        \hline
        VAE         & 1.0   & -     & -  & 0.13 & 7.10 & 3.84  \\
        SA-VAE      & 1.0   & -     & -  & 0.13 & 6.89 & 3.77  \\
        Lagging VAE & 1.0   & -     & -  & 0.16 & 7.27 & 3.79  \\
        $\beta$-VAE & 0.1   & -     & -  & 0.22 & 18.12 & 3.93 \\
        $\beta$-VAE & 0.2   & -     & -  & 0.19 & 13.78 & 3.84 \\
        $\beta$-VAE & 0.4   & -     & -  & 0.16 & 10.54 & 3.91 \\
        LIDVAE      & 1.0   & 0     & -  & 0.42 & 25.15 & 3.88 \\
        IL-LIDVAE   & 1.0   & 0.1   & -  & 0.80 & 49.54 & 3.92 \\
        IL-LIDVAE   & 1.0   & 0.2   & -  & 0.72 & 54.68 & 3.93 \\
        IL-LIDVAE   & 1.0   & 0.4   & -  & 0.72 & 67.35 & 3.92 \\
        \textbf{LRVAE(ours)} & 0.1   & -     & 1.0  & \textbf{0.95} & 45.23 & 3.86 \\
        \textbf{LRVAE(ours)} & 0.2   & -     & 1.0  & 0.93 & 33.36 & \textbf{3.99} \\
        \textbf{LRVAE(ours)} & 1.0   & -     & 1.0  & 0.41 & 12.41 & 3.77 \\
        \hline
    \end{tabular}
    \caption{Quantitative evaluation on fashionmnist dataset}
    \label{tab:metric_fmnist}
\end{table}

\begin{table}[ht]
    \centering
    \setlength{\tabcolsep}{5.5pt}
    \begin{tabular}{lcccccc}
        \hline
        Model       & $\beta$ & $L$ & $\alpha_T$  &  AU &   KL&   MI \\
        \hline
        VAE         & 1.0   & -     & -  & 0.16 & 8.69 & 3.96  \\
        SA-VAE      & 1.0   & -     & -  & 0.19 & 9.26 & 3.91  \\
        Lagging VAE & 1.0   & -     & -  & 0.19 & 9.18 & 3.84  \\
        $\beta$-VAE & 0.1   & -     & -  & 0.68 & 32.40 & 3.90 \\
        $\beta$-VAE & 0.2   & -     & -  & 0.38 & 23.48 & \textbf{3.93} \\
        $\beta$-VAE & 0.4   & -     & -  & 0.33 & 16.52 & 3.92 \\
        LIDVAE      & 1.0   & 0     & -  & 0.94 & 50.70 & 3.90 \\
        IL-LIDVAE   & 1.0   & 0.1   & -  & 0.96 & 70.19 & 3.85 \\
        IL-LIDVAE   & 1.0   & 0.2   & -  & 0.88 & 69.59 & 3.90 \\
        IL-LIDVAE   & 1.0   & 0.4   & -  & 0.94 & 89.83 & 3.89 \\
        \textbf{LRVAE(ours)} & 0.1   & -      & 0.4 & \textbf{1.00} & 45.20 & 3.84  \\
        \textbf{LRVAE(ours)} & 0.2   & -      & 0.4 & 0.99 & 32.80 & 3.90 \\
        \textbf{LRVAE(ours)} & 1.0   & -      & 0.4 & 0.22 & 11.92 & 3.91 \\
        \textbf{LRVAE(ours)} & 1.0   & -      & 1.0 & 0.66 & 14.85 & 3.90 \\
        \hline
    \end{tabular}
    \caption{Quantitative evaluation on MNIST dataset}
    \label{tab:metric_mnist}
\end{table}

\begin{table}[ht]
    \centering
    \setlength{\tabcolsep}{5.5pt}
    \begin{tabular}{lcccccc}
        \hline
        Model       & $\beta$ & $L$ & $\alpha_T$  &   AU&  KL &  MI \\
        \hline
        VAE         & 1.0   & -     & -  & 0.06 & 2.32 & 2.11   \\
        SA-VAE      & 1.0   & -     & -  & 0.09 & 2.80 & 2.45   \\
        Lagging VAE & 1.0   & -     & -  & 0.06 & 2.23 & 2.03   \\
        $\beta$-VAE & 0.1   & -     & -  & 0.85 & 30.05 & \textbf{3.93}  \\
        $\beta$-VAE & 0.2   & -     & -  & 0.52 & 18.24 & 3.86  \\
        $\beta$-VAE & 0.4   & -     & -  & 0.22 & 11.33 & 3.91  \\
        LIDVAE      & 1.0   & 0     & -  & \textbf{0.99} & 15.02 & 3.85  \\
        IL-LIDVAE   & 1.0   & 0.2   & -  & 0.93 & 26.24 & 3.88  \\
        IL-LIDVAE   & 1.0   & 0.8   & -  & 0.97 & 109.73 & \textbf{3.93} \\
        IL-LIDVAE   & 1.0   & 1.0   & -  & 0.45 & 120.17 & 3.88 \\
        \textbf{LRVAE(ours)} & 1.0   & -     & 0.1  & 0.13 & 3.68 & 2.76   \\
        \textbf{LRVAE(ours)} & 1.0   & -     & 0.4  & 0.77 & 6.18 & 3.26   \\
        \textbf{LRVAE(ours)} & 1.0   & -     & 1.0  & 0.86 & 7.84 & 3.32   \\
        \hline
    \end{tabular}
    \caption{Quantitative evaluation on Omniglot dataset}
    \label{tab:metric_omniglot}
\end{table}

\begin{table}[ht]
    \centering
    \setlength{\tabcolsep}{5.5pt}
    \begin{tabular}{lcccccc}
        \hline
        Model       & $\beta$ & $L$ & $\alpha_T$  &   AU& KL &  MI \\
        \hline
        VAE         & 1.0   & -     & -  & 0.47 & 40.87 & 3.85  \\
        SA-VAE      & 1.0   & -     & -  & 0.67 & 45.16 & 3.88 \\
        Lagging VAE & 1.0   & -     & -  & 0.63 & 42.46 & 3.96   \\
        $\beta$-VAE & 0.1   & -     & -  & 0.99 & 137.20 & 3.90   \\
        $\beta$-VAE & 0.2   & -     & -  & 0.99 & 85.46 & 3.92   \\
        LIDVAE      & 1.0   & 0     & -  & \textbf{1.00} & 307.72 & \textbf{3.99}  \\
        IL-LIDVAE   & 1.0   & 0.2   & -  & \textbf{1.00} & 336.86 & 3.88  \\
        IL-LIDVAE   & 1.0   & 1.0   & -  & 0.97 & 458.54 & 3.84  \\
        \textbf{LRVAE(ours)} & 1.0   & -     & 0.8  & \textbf{1.00} & 63.89 & 3.92    \\
        \hline
    \end{tabular}
    \caption{Quantitative evaluation on CelebA dataset}
    \label{tab:metric_celeba}
\end{table}

First, we evaluate and compare several models using quantitative metrics(AU, KL, MI) \cite{wang2021posterior}\cite{kinoshita2023controlling}. Results are shown in Table \ref{tab:metric_fmnist},  \ref{tab:metric_mnist}, \ref{tab:metric_omniglot}, and \ref{tab:metric_celeba}. Models used in the comparison are VAE\cite{kingma2013auto}, SA-VAE\cite{kim2018semi}, Lagging VAE\cite{he2019lagging}, $\beta$-VAE\cite{higgins2017beta}, LIDVAE\cite{wang2021posterior}, and IL-LIDVAE\cite{kinoshita2023controlling}. 

$\beta$ denotes weight of regularization loss, while $\alpha_T$ denotes final weight of our proposed latent reconstruction loss which is warm-up from 0 to described value through linear increase from epoch 0 to end(100). $L$ denotes inverse Lipschitz value of IL-LIDVAE. Activation Unit(AU) denotes ratio of activated latent channels which is calculated as AU$=\sum_c^C{\mathds{1}\{{Cov}_{p(x)}(\mathbb{E}_{q(z|x)}[z_{(c)}])\geq\varepsilon\}}$, where $z_{(c)}=\{z_{i,c},...,z_{i,c}\}$ is the $c$-th dimension of the latent variable $z$ for the $N$ validation data points. $\mathds{1}$ denotes an indicator function. Since deactivated channels are almost completely unidentifiable (related to $\varepsilon$), AU serves as a meaningful metric for estimating the upper bound of latent identifiability. Bigger AU is better. KL denotes sample-wise KL divergence (KL$=D_{KL}(q_{\varphi}(z|x_i)||p(z))$). This KL value indicates average of how far away each latent distribution of sample point is from prior. Too small value means that latent variable is not identifiable with prior. On the other hand, too big value means poor regularization. As we claim, the ideal case is the posterior converges to prior but each latent distribution has moderate KL values. MI denotes mutual information between $x_i$ and $z_i$. (MI$=I(z,x)=\mathbb{E}_{x}\big[\mathbb{E}_{q(z|x)}[q(z|x)]\big]-\mathbb{E}_{x}\big[\mathbb{E}_{q(z|x)}[q(z)]\big]$). MI indicates correlation between input data and latent variable. 
These metrics have some limitation to measure every aspect of posterior collapse(especially local posterior collapse), we can see influence in some meaningful aspects from them.

In table \ref{tab:metric_fmnist}, we can see evidence that smaller $\beta$ of $\beta$-VAE contributes to reduce posterior collapse on fashionMNIST dataset\cite{xiao2017fashion}. AU and KL are increased by decreasing $\beta$. When the $\beta$ is 1.0(VAE), AU is 0.13. The AU increases to 0.16, 0.19, and 0.22 when $\beta$ is decreased to 0.4, 0.2, and 0.1. Thus, reducing $\beta$ is meaningful influence, but limited. Decreasing $\beta$ always increases AU. However, be careful that decreasing $\beta$ also induces poor regularization. LIDVAE and IL-LIDVAE impressively increase AU to 0.42, 0.80, and 0.72 and KL to 25.15, 49.54, 54.58, and 67.35. However these KL values may be too large for regularization. Whereas, our LRVAE also achieves high AU such as 0.95, 0.93, and 0.41 with KL 45.23, 33.36, and 12.41. Although there is no a particular appropriate value, we say that this is an evidence that our method has benefit to avoid posterior collapse with better regularization.  

We also evaluate on MNIST\cite{deng2012mnist} as shown in Table \ref{tab:metric_mnist}. Our proposed loss term with a certain importance($\alpha_T=0.4$) contributes to increase AU from 0.68, 0.38, and 0.16 to 1.00, 0.99, and 0.22 when $\beta=0.1, 0.2, 1.0$ respectively. We also can see that higher importance($\alpha_T=1.0$) increases AU again to 0.66 from 0.22.

We can see similar pattern in experiments on Omniglot dataset\cite{lake2015human} in Table \ref{tab:metric_omniglot}. AU of VAE is only 0.06 but this value increases to 0.86 by minimizing latent reconstruction loss($\alpha_T=1.0$). Reducing importance of regularization loss($\beta$) to 0.1 also achieves similar AU(0.85), but the KL value also increases extremely(2.32 to 30.05). Results of IL-LIDVAE also show huge KL values(109.73, 120.17) while KL of LRVAE is relatively stable(7.84). MI value gradually increases from 2.11 in VAE to 2.76, 3.26, and 3.32 where importance of LR($\alpha_T$) increases from 0 to 0.1, 0.4, and 1.0.

Table \ref{tab:metric_celeba} shows results on CelebA dataset\cite{liu2015deep}. Our proposed LR also contribute to increase AU to 1.00 from 0.47 under proper KL value(63.89). On the other hand, LIDVAE and IL-LIDVAE achieve similar AU with KL values which are over 300.

\begin{figure}
    \centering
    \begin{subfigure}{0.24\columnwidth}
        \includegraphics[width=\textwidth]{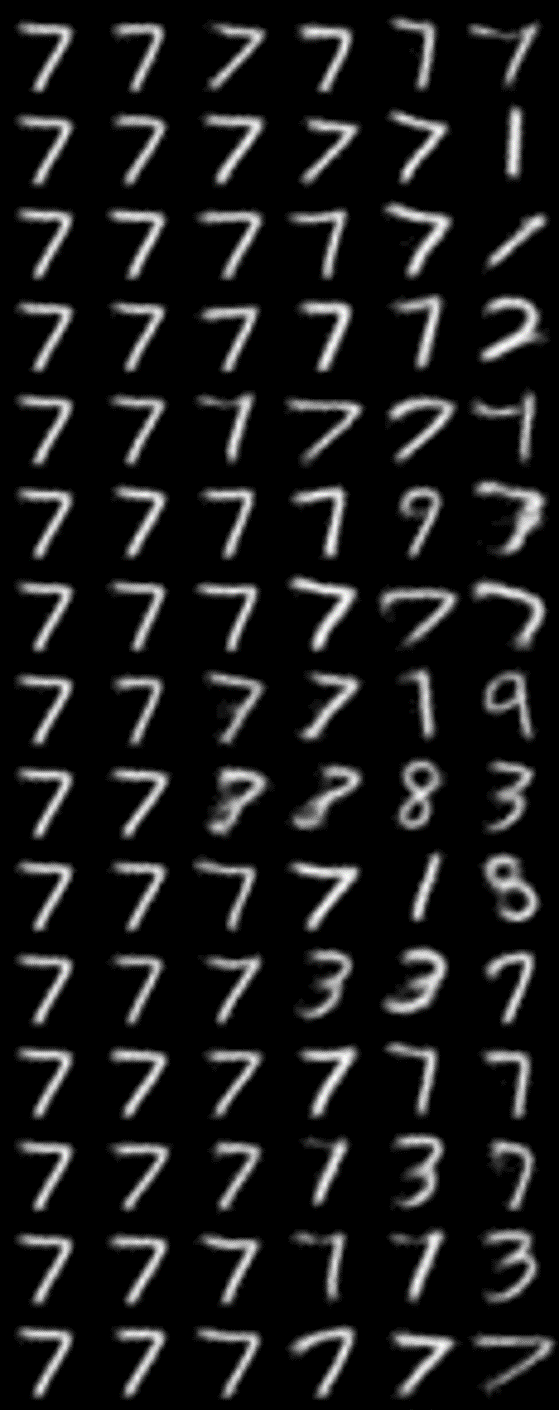}
        \caption{VAE}
        \label{fig:noise_mnist_convvae}
    \end{subfigure}
    \begin{subfigure}{0.24\columnwidth}
        \includegraphics[width=\textwidth]{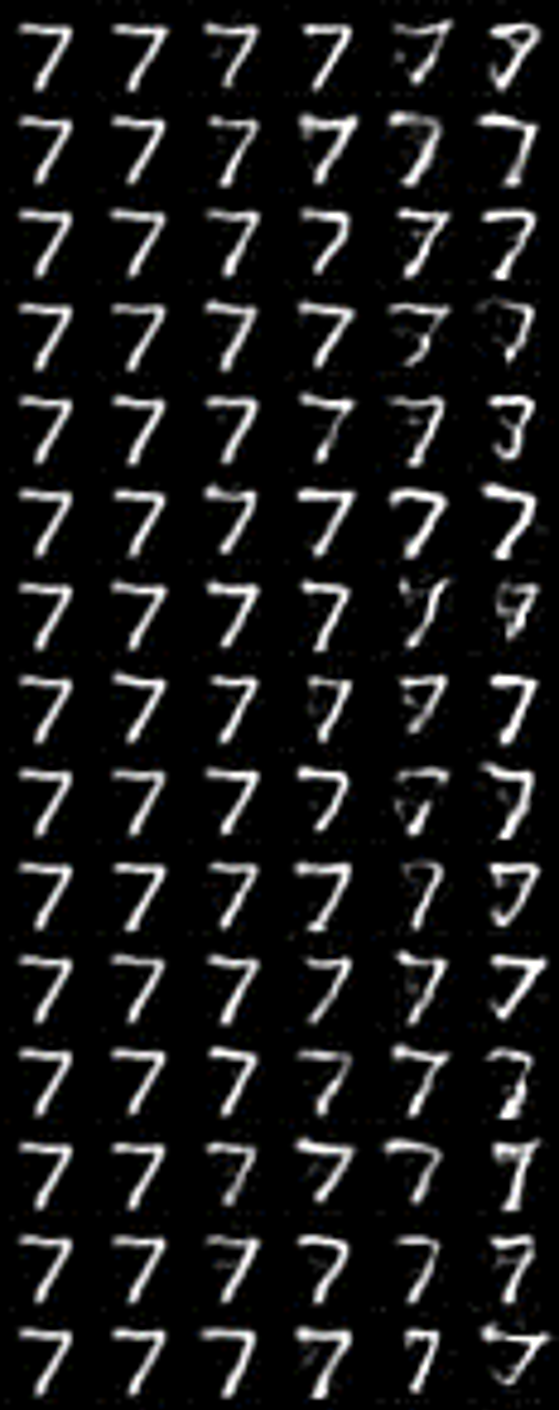}
        \caption{LIDVAE}
        \label{fig:noise_mnist_lidvae}
    \end{subfigure}
    \begin{subfigure}{0.24\columnwidth}
        \includegraphics[width=\textwidth]{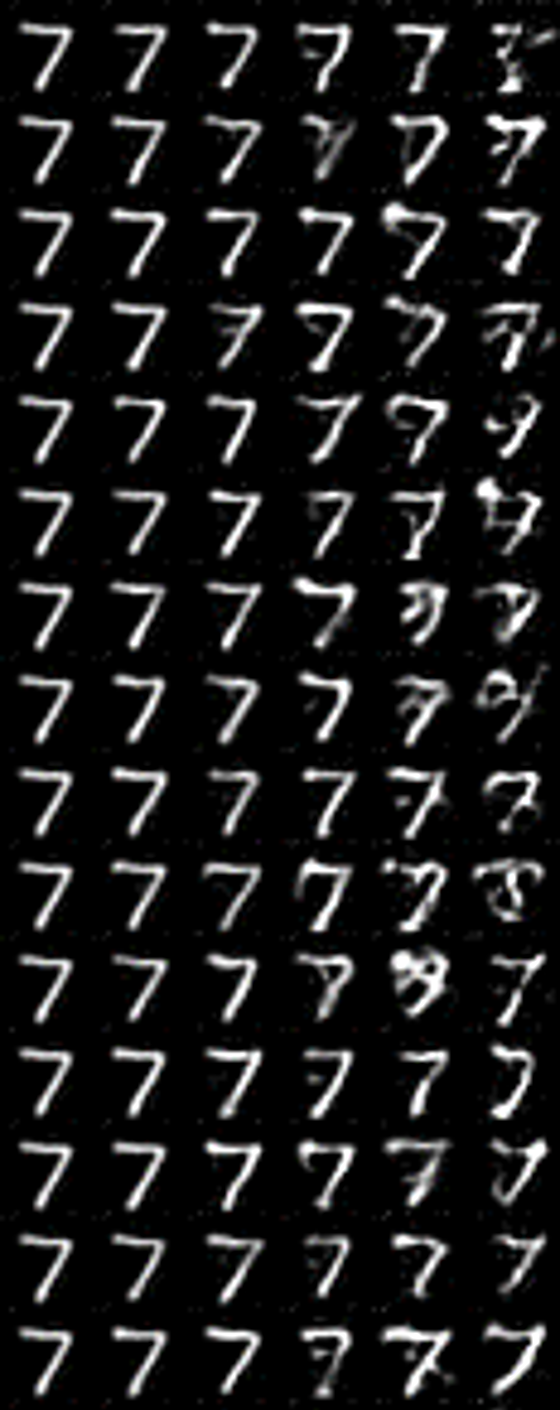}
        \caption{IL-LIDVAE}
        \label{fig:noise_mnist_illidvae}
    \end{subfigure}
    \begin{subfigure}{0.24\columnwidth}
        \includegraphics[width=\textwidth]{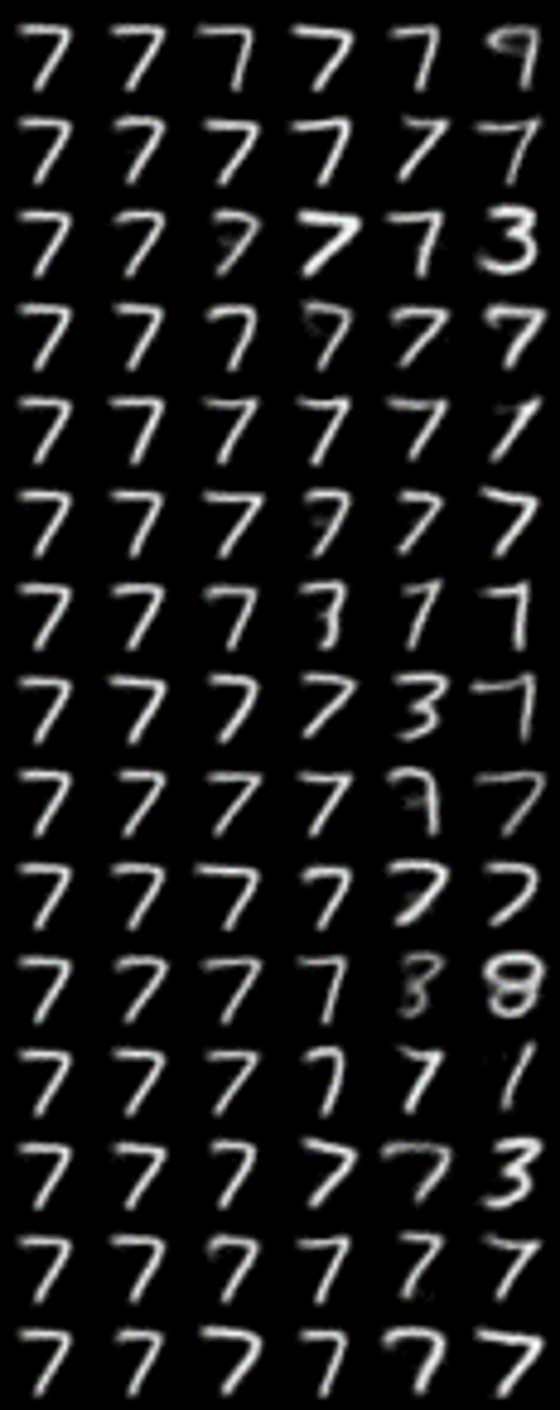}
        \caption{LRVAE}
        \label{fig:noise_mnist_convlrvae}
    \end{subfigure}
    \caption{Reconstruction from noised latent vectors results on MNIST dataset: IL-LIDVAE uses $L=0.2$ and LRVAE uses $\alpha_T=0.8$.}
    \label{fig:noise_mnist}
\end{figure}

Experiments are conducted to explore the response of decoders to local changes in latent variables as shown in Figure \ref{fig:noise_mnist}. In this figure, MNIST dataset\cite{deng2012mnist} is used. The columns of each sub-figure are the results of decoding the latent vector after adding Gaussian noise with variance of 0.0, 0.2, 0.4, 0.6, 0.8, and 1.0 from the left. That is, the leftmost column of each sub-figure is equal to the pure reconstruction result. In this experiment, all of results in the figure are reconstructed from an exactly same input data point(a handwriting `7'). In this experiment, we use shallow structure of only 1,168,641 parameters for VAE and LRVAE, while LIDVAE and IL-LIDVAE have 3,776,658 parameters. Models are trained 100 epochs equally. VAE(\ref{fig:noise_mnist_convvae}) generates out of class(1, 3, 8, or, 9) results compared other models. LIDVAE(\ref{fig:noise_mnist_lidvae}) and IL-LIDVAE(\ref{fig:noise_mnist_illidvae}) show tendency to produce images with added artifacts rather than normally diverse results. These artifacts are due to the side effect of the network restoration. 
Whereas, our LRVAE shows better results as shown in Figure \ref{fig:noise_mnist_convlrvae}.

\begin{figure}
    \centering
    \begin{subfigure}{0.47\columnwidth}
        \includegraphics[width=\textwidth]{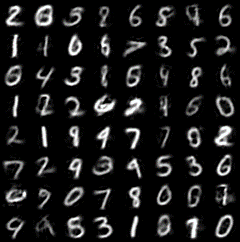}
        \caption{VAE}
        \label{fig:sampling_betavae}
    \end{subfigure}
    \begin{subfigure}{0.47\columnwidth}
        \includegraphics[width=\textwidth]{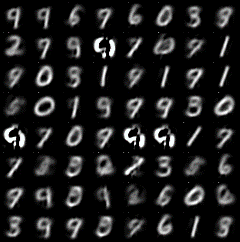}
        \caption{LIDVAE}
        \label{fig:sampling_lidvae}
    \end{subfigure}
    \begin{subfigure}{0.47\columnwidth}
        \includegraphics[width=\textwidth]{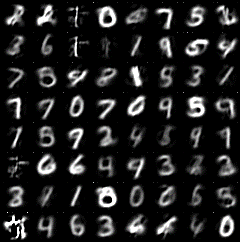}
        \caption{IL-LIDVAE($L=0.01$)}
        \label{fig:sampling_illidvae}
    \end{subfigure}
    \begin{subfigure}{0.47\columnwidth}
        \includegraphics[width=\textwidth]{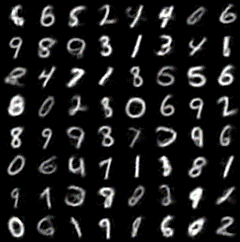}
        \caption{LRVAE(Ours, $\alpha_T=0.8$)}
        \label{fig:sampling_lrvae}
    \end{subfigure}
    \caption{Sampling(Generation) results on MNIST dataset}
    \label{fig:sampling}
\end{figure}

We provide generation results from randomly sampled latent vectors($z \sim N(0,I)$) in Figure \ref{fig:sampling}. In this experiment, LIDVAE and IL-LIDVAE have 3,776,658 parameters while $\beta$-VAE and LRVAE(ours) have 1,915,792 parameters. In our implementation, LIDVAE and IL-LIDVAE use single Gaussian prior for equal comparison with $\beta$-VAE and LRVAE, unlike original implementation\cite{wang2021posterior}\cite{kinoshita2023controlling} that uses GMVAE\cite{dilokthanakul2016deep}. We can see some collapsed outputs and less diversity in LIDVAE(Figure \ref{fig:sampling_lidvae}). Diversity can be improved by controlling inverse Lipschitz value $L$(Figure \ref{fig:sampling_illidvae}). However, collapsed outputs are still observed. Strictly restricted network structure has limitation to optimize mapping between complex data distribution and simple prior distribution. 

\begin{figure}[t!]
    \centering
    \begin{subfigure}{0.1\columnwidth}
        \includegraphics[width=\textwidth]{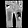}
        \caption{}
        \label{fig:noise_fmnist_origin}
    \end{subfigure}
    \begin{subfigure}{0.44\columnwidth}
        \includegraphics[width=\textwidth]{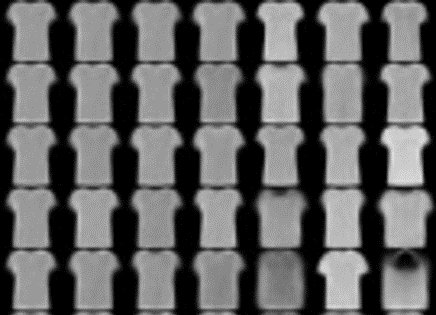}
        \caption{VAE}
        \label{fig:noise_fmnist_vae}
    \end{subfigure}
    \begin{subfigure}{0.44\columnwidth}
        \includegraphics[width=\textwidth]{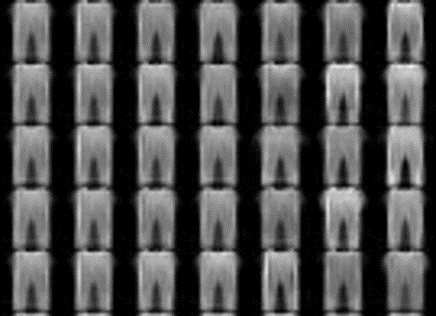}
        \caption{LIDVAE}
        \label{fig:noise_fmnist_lidvae}
    \end{subfigure}
    \raggedleft 
    \begin{subfigure}{0.44\columnwidth}
        \includegraphics[width=\textwidth]{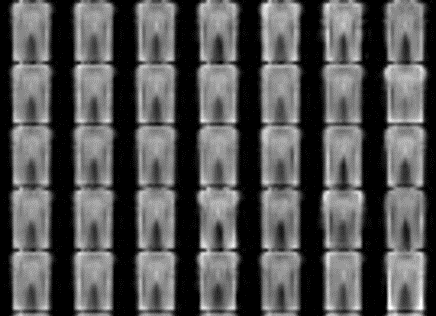}
        \caption{IL-LIDVAE}
        \label{fig:noise_fmnist_illidvae}
    \end{subfigure}
    \begin{subfigure}{0.44\columnwidth}
        \includegraphics[width=\textwidth]{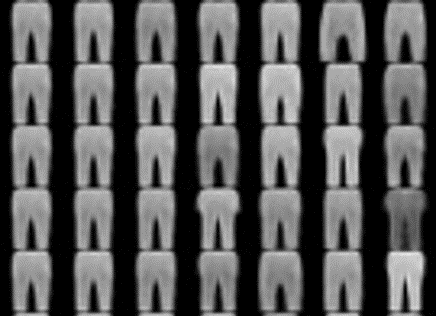}
        \caption{LRVAE (Ours)}
        \label{fig:noise_fmnist_convlrvae}
    \end{subfigure}
    \caption{Reconstruction from noised latent vectors on fashionMNIST dataset: (a) is the input image. IL-LIDVAE uses $L=0.2$ and LRVAE uses $\alpha_T=1.0$.}
    \label{fig:noise_fmnist}
\end{figure}
\begin{figure}[b!]
    \centering
    \begin{subfigure}{0.95\columnwidth}
        \includegraphics[width=\textwidth]{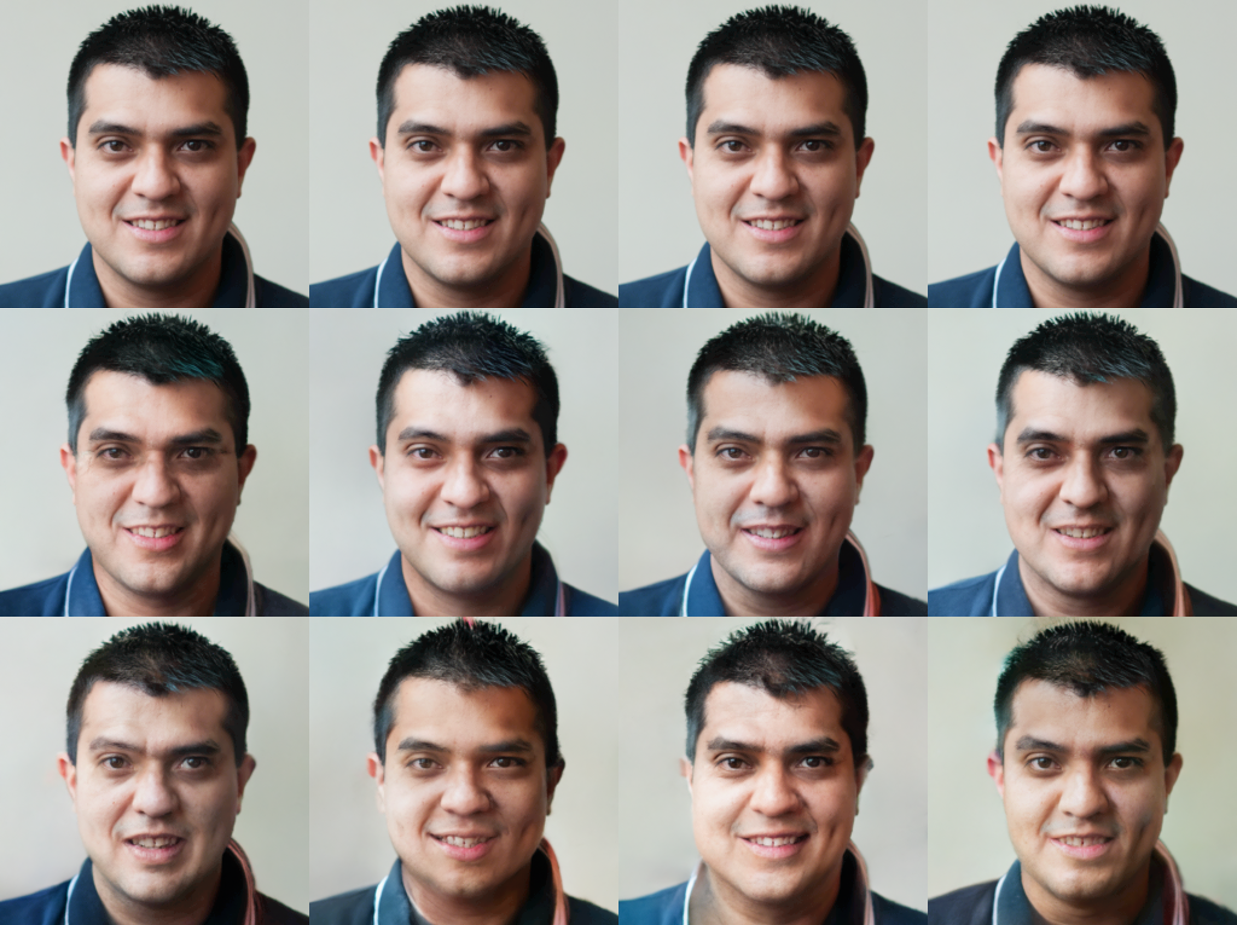}
        \caption{VDVAE}
        \label{fig:ffhq256_noise_vdvae}
    \end{subfigure}
    \hspace{0.5cm}
    \begin{subfigure}{0.95\columnwidth}
        \includegraphics[width=\textwidth]{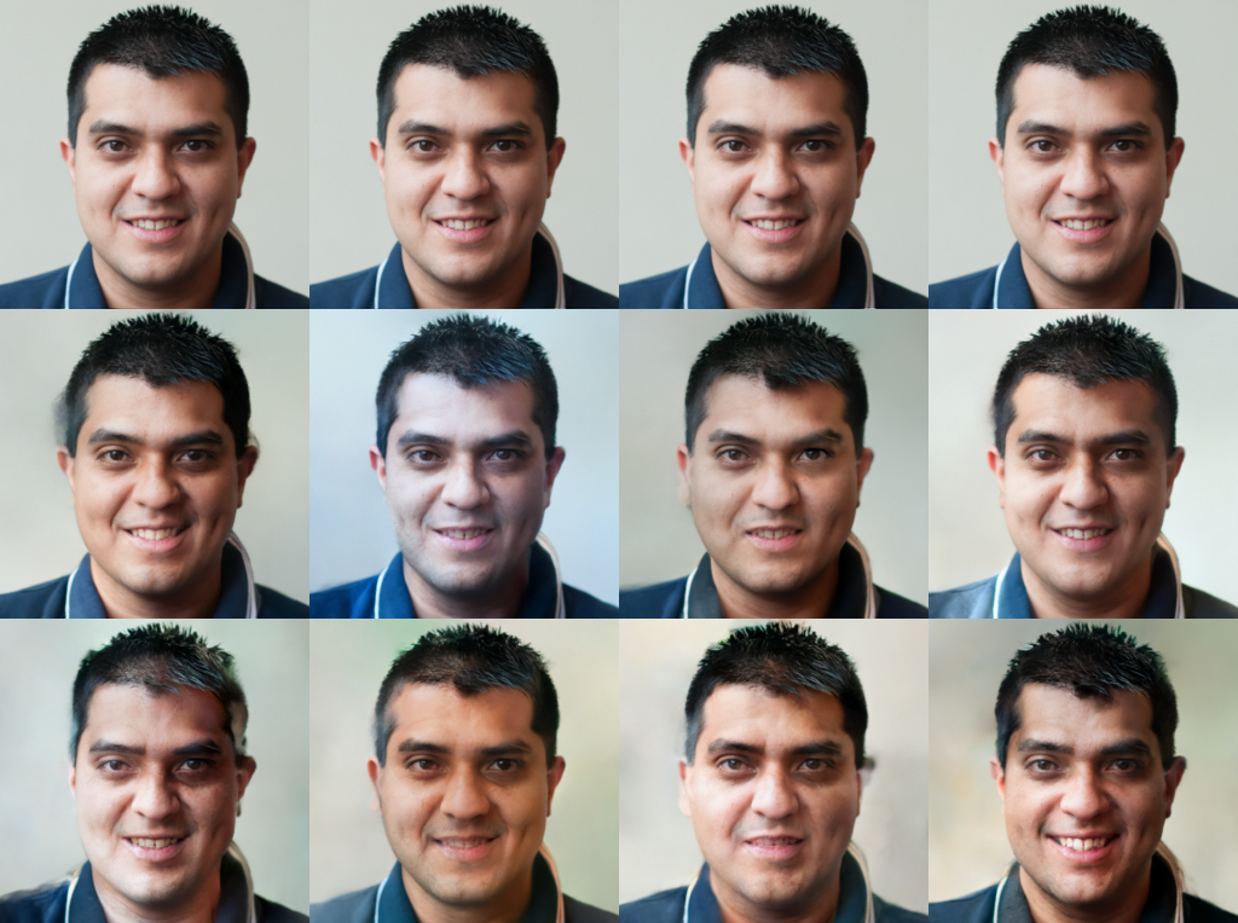}
        \caption{LRVDVAE (Ours)}
        \label{fig:ffhq256_noise_lrvdvae}
    \end{subfigure}
    \caption{Reconstruction results from noised latent vectors on FFHQ-256 dataset}
    \label{fig:ffhq256_noise}
\end{figure}

We also have reconstruction experiments with added tiny Gaussian noises on fashionMNIST\cite{xiao2017fashion} as shown in Figure \ref{fig:noise_fmnist}. VAE and LRVAE share the same network structure and parameters (1,168,641), while LIDVAE and IL-LIDVAE have identical network structures and parameters (3,776,658). Models are trained 100 epochs equally.  
VAE fails to reconstruct the pants images as shown in (b). The variance of Gaussian noise added increases by 0.5 from the left column to the right column. e.g., noises sampled from $N(0,3.0I)$ are added to the latent vector for last images. LIDVAE(c) and IL-LIDVAE(d) reconstruct pants, but their outputs are blurry and lack in diversity. Our LRVAE(e) successfully reconstructs the object of the input relatively clearly and showing diversity proportional to magnitude of noise.
We test on FFHQ-256\cite{karras2019style} dataset through VDVAE\cite{child2020very} for qualitative evaluation of our proposed approach on large scale and deeper model. We fine-tune the VDVAE from pre-trained parameters through additional 50,000 iteration with each different ELBO objectives. In Figure \ref{fig:ffhq256_noise}, LRVDVAE is trained using proposed ELBO objectives with $\alpha_T=1.0$, while VDVAE is trained using classical ELBO objectives. Each row shows images which are generated from latent vectors with the addition of four randomly sampled noise vectors from $N(0,0.0I)$, $N(0,0.3I)$, and $N(0,0.4I)$. i.e., the images at first row are equal to the pure reconstruction result. 

VDVAE(a) shows little difference among output images despite of added noise. On the other hand, LRVDVAE(b) shows larger variation among output images compared to VDVAE when the noise is sampled from $N(0,0.3I)$ and $N(0,0.4I)$.
Classical ELBO doesn't enforce the model to identify latent vectors which sampled from same input data. While, proposed ELBO enforces the model to response to small amount change of latent variable revealing improved local identifiability.

\section{Conclusion}
\label{sec:conclusion}

In this work, we introduce the concept of $\epsilon(x)$‑posterior collapse,  
a more general and relaxed notion that permits thresholds which vary locally across the data manifold.  
We further define $L(z)$‑bi‑Lipschitz continuity to express a locally elastic constraint on the decoder.  
We theoretically and empirically demonstrate that enforcing $L(z)$‑bi‑Lipschitz continuity helps mitigate $\epsilon(x)$‑posterior collapse. We believe this insight will broaden the practical utility of VAEs by removing unnecessary restrictions. 
Moreover, we propose Latent Reconstruction(LR) loss as an additional loss to the conventional data reconstruction loss aimed at control posterior collapse by hyper-parameter $\alpha_T$. Minimizing LR loss achieves latent identifiability by leading the decoder and encoder to satisfy $L(z)$-bi-Lipschitz continuity. 
As a result, the LR loss enables controlling posterior collapse across most VAE architectures. In practice, our method enhances the diversity of the output in local regions while using fewer parameters. 
In conclusion, our work contributes to a better understanding of posterior collapse and provides a pathway to address it under more practical settings for VAEs. %

Despite these advancements, it's important to acknowledge certain limitations of this work. Our analysis indicates that the $\alpha$ value and the local bi-Lipschitz constant $L(z)$ would not be directly proportional. Consequently, for certain datasets and network architectures, a high $\alpha$ might still result in a weakly enforced constraint. While this relaxation is an intended design choice to mitigate side effects from network constraints, it could potentially diminish the effectiveness of preventing posterior collapse in some scenarios. Furthermore, our theoretical framework assumes that both the DR loss and LR loss can be sufficiently minimized. However, achieving sufficiently low values for both losses simultaneously can be challenging in practice. We hypothesize that a certain degree of invertibility, though not necessarily full invertibility, might be required for both the encoder and decoder to adequately minimize these losses. This aspect was not explored in the current study.




\bibliography{lrvae}
\bibliographystyle{icml2025}

\newpage
\appendix
\onecolumn

\section{Proofs of Main Theorems}

\subsection{Proof of Theorem \ref{thm:LbL_avoid_LPC}} \label{app:proof_LbL_avoid_LPC}

\begin{theorem}[Gaussian VAE: $L(z)$-bi-Lipschitz avoids $\epsilon(x)$-posterior collapse with probability $1-\zeta$]\label{thm:gauss-laplace-delta}
Fix a confidence level $\zeta\in(0,1)$ and let radius $R_\zeta$ be defined as in Definition \ref{def:L-bi-Lip}.  
Let the prior be $p(z)=\mathcal N(0,I)$ and the likelihood
\[
  p(x\mid z)=\mathcal N\!\bigl(D_{\hat\theta}(z),\,\sigma^2 I\bigr).
\]
Assume the decoder $D_{\hat\theta}$ is $\mathcal C^2$ and \emph{$L(z)$-bi-Lipschitz with probability $1-\zeta$}.  
Consider any $x\in\mathcal X$ for which there exists $z_0$ satisfying
\[
  \|z_0\|\le R_\zeta, 
  \quad D_{\hat\theta}(z_0)=x.
\]
Define
\begin{align*}
  \Sigma &\;=\; \Bigl(I + \tfrac{1}{\sigma^2}\,J_D(z_0)^{\!\top} J_D(z_0)\Bigr)^{-1}, \\
  \kappa_{\min} &\;=\; \sigma_{\min}\bigl(J_D(z_0)\bigr)\;\ge\;\tfrac{1}{L(z_0)} \;>\;0.
\end{align*}
Then, under the Laplace approximation,
\begin{align*}
  &p(z\mid x)\approx \mathcal N\bigl(z_0,\Sigma\bigr) \\
  &\Rightarrow
  D_{\mathrm{KL}}\!\bigl(p(z\mid x)\,\|\,p(z)\bigr)
  \;\ge\;\frac{C}{2}\,
           \log\!\Bigl(1+\tfrac{\kappa_{\min}^{2}}{\sigma^{2}}\Bigr)
  \;>\;0.
\end{align*}
Consequently, with probability at least $1-\zeta$ over $x\sim p_{\mathrm{data}}$,
no $\epsilon(x)<\tfrac{C}{2}\log\!\bigl(1+\kappa_{\min}^{2}/\sigma^{2}\bigr)$
can satisfy Definition~\ref{def:epsx_posterior_collapse}.
\end{theorem}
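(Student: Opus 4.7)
The plan is to obtain a Gaussian approximation of $p(z\mid x)$ via the Laplace method at $z_0$, then apply the closed-form KL identity between two Gaussians, and finally convert the singular-value bound provided by the bi-Lipschitz hypothesis into a strictly positive lower bound on the divergence; the probability statement then falls out of the definition of $R_\zeta$.

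First I would identify $z_0$ as a critical point of $\log p(z\mid x)=\log p(x\mid z)+\log p(z)-\log p(x)$. Because $D_{\hat\theta}(z_0)=x$, the log-likelihood attains its maximum at $z_0$, and a second-order Taylor expansion of $-\log p(z\mid x)$ around $z_0$ yields the Hessian
\[
  H(z_0)\;=\;I\;+\;\tfrac{1}{\sigma^{2}}\Bigl[J_D(z_0)^{\!\top}J_D(z_0)\;+\;\textstyle\sum_{k}\bigl(D_k(z_0)-x_k\bigr)\nabla^{2}D_k(z_0)\Bigr].
\]
The residual terms vanish at $z_0$, leaving the pure Gauss–Newton form $H(z_0)=\Sigma^{-1}$. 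The $\mathcal C^{2}$ assumption on $D_{\hat\theta}$ legitimises the Laplace approximation $p(z\mid x)\approx\mathcal N(z_0,\Sigma)$.

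Next I would apply the standard KL formula for $\mathcal N(z_0,\Sigma)\,\|\,\mathcal N(0,I)$,
\[
  2D_{\mathrm{KL}}\;=\;\operatorname{tr}(\Sigma)+\|z_0\|^{2}-C+\log\det\bigl(\Sigma^{-1}\bigr),
\]
and diagonalise $J_D(z_0)^{\!\top}J_D(z_0)$ via the singular values $\lambda_1,\dots,\lambda_C$ of $J_D(z_0)$. Rearranging gives
\[
  2D_{\mathrm{KL}}\;=\;\|z_0\|^{2}\;+\;\sum_{i=1}^{C}\!\Bigl[\log\!\bigl(1+\tfrac{\lambda_i^{2}}{\sigma^{2}}\bigr)\;-\;\tfrac{\lambda_i^{2}/\sigma^{2}}{1+\lambda_i^{2}/\sigma^{2}}\Bigr].
\]
The function $f(t)=\log(1+t)-t/(1+t)$ is nonnegative and monotone increasing for $t\ge0$. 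The locally bi-Lipschitz hypothesis at $z_0$ forces $\sigma_{\min}(J_D(z_0))\ge 1/L(z_0)>0$ (take a directional difference quotient inside $B(z_0,r(z_0))$ and pass to the limit), so $\lambda_i\ge\kappa_{\min}$ for every $i$, and monotonicity of $f$ yields a lower bound of the claimed form in $\log(1+\kappa_{\min}^{2}/\sigma^{2})$. The probability guarantee is then immediate: by the definition of $R_\zeta$ the event $\{\|z_0\|\le R_\zeta\}$ has mass at least $1-\zeta$ under the prior, and pushing it through $D_{\hat\theta}$ supplies the analogous guarantee over $x\sim p_{\mathrm{data}}$; the contrapositive with Definition~\ref{def:epsx_posterior_collapse} rules out any $\epsilon(x)$ strictly below the bound.

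The main obstacle is the Laplace step itself: it is an approximation, and a fully rigorous treatment would require controlling the cubic remainder of the Taylor expansion using the $\mathcal C^{2}$ bound on $D_{\hat\theta}$ and the local radius $r(z_0)$ from Definition~\ref{def:L-bi-Lip}; the theorem absorbs this by writing ``$\approx$''. A secondary subtlety is matching the stated constant $C/2$: the bound from $f$ is actually sharper than $\tfrac{C}{2}\log(1+\kappa_{\min}^{2}/\sigma^{2})$ when $\kappa_{\min}^{2}/\sigma^{2}$ is large, but genuinely weaker in the small-ratio regime where $f(t)\sim t^{2}/2$. So the stated form should be read as the qualitative positivity assertion $D_{\mathrm{KL}}>0$ bounded away from zero uniformly over the effective support, which is the content needed to exclude $\epsilon(x)$-posterior collapse for any function $\epsilon$ dominated by that threshold.
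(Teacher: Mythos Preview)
Your approach matches the paper's: Laplace expansion at $z_0$, the closed-form Gaussian KL, and the singular-value bound $\sigma_{\min}(J_D(z_0))\ge 1/L(z_0)$ from the local bi-Lipschitz hypothesis, followed by the push-forward of the effective-support event to $x$. The one substantive difference is the handling of the trace term. The paper observes $\operatorname{tr}(\Sigma)\le C$ and then discards $\operatorname{tr}(\Sigma)-C$ along with $\|z_0\|^{2}$ to arrive at $D_{\mathrm{KL}}\ge\tfrac12\log\det\Sigma^{-1}$, which immediately yields $\tfrac{C}{2}\log(1+\kappa_{\min}^{2}/\sigma^{2})$; but $\operatorname{tr}(\Sigma)-C\le 0$, so dropping it does not preserve a lower bound. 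Your eigenvalue-wise identity $2D_{\mathrm{KL}}=\|z_0\|^{2}+\sum_i f(\lambda_i^{2}/\sigma^{2})$ with $f(t)=\log(1+t)-t/(1+t)\ge 0$ is the correct accounting, and your caveat about the constant is well-founded: since $f(t)<\log(1+t)$ for every $t>0$, the provable bound is $\tfrac{C}{2}f(\kappa_{\min}^{2}/\sigma^{2})$, strictly smaller than the stated $\tfrac{C}{2}\log(1+\kappa_{\min}^{2}/\sigma^{2})$ (so in fact the $f$-bound is uniformly weaker, not only in the small-ratio regime). The qualitative conclusion---a strictly positive lower bound depending only on $\kappa_{\min}$ and $\sigma$, hence exclusion of $\epsilon(x)$-posterior collapse for any $\epsilon$ below that threshold on the effective support---survives intact, so your reading of the theorem as a positivity statement with the exact constant being nominal is the right one.
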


\begin{proof}
We restate the calculation while explicitly conditioning on the
high--probability event that defines the \emph{effective support}.

\medskip\noindent\textbf{0. Conditioning on the effective support.}
Let
\(
  \mathcal E_\zeta := \{\|Z\|\le R_\zeta\}
\)
so that
\(
  \Pr_{Z\sim p}(\mathcal E_\zeta) \ge 1-\zeta
\)
by Definition \ref{def:L-bi-Lip}.  
Fix any \(x\in\mathcal X\) for which there exists
\(z_0\in\mathcal E_\zeta\) satisfying \(D_{\hat\theta}(z_0)=x\).
Because \(z_0\) lies in \(\mathcal E_\zeta\),
the local \(L(z)\)-bi-Lipschitz bound
\(
  \sigma_{\min}\!\bigl(J_D(z_0)\bigr) \ge 1/L(z_0) > 0
\)
is guaranteed.

\medskip\noindent\textbf{1. Laplace approximation of the posterior.}
Define
\(
  \ell(z)=\log p(x,z)=\log p(x\mid z)+\log p(z)
\).
A second--order Taylor expansion of \(-\ell(z)\) around its mode \(z_0\)
gives
\begin{align*}
  -\ell(z)
  &\approx \tfrac12\,(z-z_0)^\top
          \Bigl(I + \tfrac{1}{\sigma^{2}}\,
                J_D(z_0)^{\!\top} J_D(z_0)\Bigr)
          (z-z_0)
          + \text{const},
\end{align*}
whence
\(
  p(z\mid x)\approx\mathcal N(z_0,\Sigma)
\)
with
\(
  \Sigma^{-1}=I+\sigma^{-2}J_D(z_0)^{\!\top}J_D(z_0).
\)

\medskip\noindent\textbf{2. Exact KL divergence between two Gaussians.}
For any mean--covariance pair \((m,\Sigma)\),
\begin{equation*}
  D_{\mathrm{KL}}\!\bigl(
    \mathcal N(m,\Sigma)\;\|\;\mathcal N(0,I)
  \bigr)
  = \tfrac12\bigl(
      \mathrm{tr}(\Sigma)+m^\top m-\log\det\Sigma-C
    \bigr).
\end{equation*}
Substituting \(m=z_0\) and the above \(\Sigma\) yields
\begin{equation*}
  D_{\mathrm{KL}}\!\bigl(p(z\mid x)\,\|\,p(z)\bigr)
  \approx \tfrac12\bigl(
    \mathrm{tr}(\Sigma)+z_0^\top z_0-\log\det\Sigma-C
  \bigr).
\end{equation*}

\medskip\noindent\textbf{3. Lower bound via the smallest singular value.}
Because every eigenvalue of \(\Sigma^{-1}\) is at least \(1\),
each eigenvalue of \(\Sigma\) is at most \(1\), so
\(\mathrm{tr}(\Sigma)\le C\).
Dropping the non--negative term \(z_0^\top z_0\) gives
\begin{align*}
  D_{\mathrm{KL}}(p(z\mid x)\,\|\,p(z))
  &\ge -\tfrac12\log\det\Sigma
   =  \tfrac12\log\det\Sigma^{-1}.
\end{align*}
Let \(\varsigma_i\) be the singular values of \(J_D(z_0)\); then
the eigenvalues of \(\Sigma^{-1}\) are
\(
  \lambda_i = 1+\varsigma_i^{2}/\sigma^{2}
\),
so, with
\(
  \kappa_{\min}:=\min_i\varsigma_i\ge 1/L(z_0)>0
\),
\begin{align*}
  \det\!\bigl(\Sigma^{-1}\bigr)
    &=\prod_{i=1}^{C}\bigl(1+\varsigma_i^{2}/\sigma^{2}\bigr)
     \;\ge\;
     \bigl(1+\kappa_{\min}^{2}/\sigma^{2}\bigr)^{C},\\
  D_{\mathrm{KL}}(p(z\mid x)\,\|\,p(z))
    &\ge \tfrac{C}{2}\,
          \log\!\bigl(1+\kappa_{\min}^{2}/\sigma^{2}\bigr)
      \;>\;0.
\end{align*}

\medskip\noindent\textbf{4. Translating the bound to a probabilistic statement.}
Steps~0--3 hold for every \(z_0\in\mathcal E_\zeta\); hence
\[
  \Pr_{Z\sim p}\!\Bigl(
    D_{\mathrm{KL}}\bigl(p(z\mid x)\,\|\,p(z)\bigr)>0
  \Bigr)\;\ge\;1-\zeta.
\]
Consequently, with the same probability over \(x\) induced by the
generative process, no
\(
  \epsilon(x) < \tfrac{C}{2}\log\bigl(1+\kappa_{\min}^{2}/\sigma^{2}\bigr)
\)
can satisfy Definition~\ref{def:epsx_posterior_collapse}.
This completes the proof.
\end{proof}

\subsection{Proof of Theorem \ref{thm:hp-bilip-relaxed}} \label{app:proof_hp-bilip-relaxed}

\begin{theorem}[High–Probability Locally Bi-Lipschitz under Piecewise-Linear Networks]
Fix confidence levels $\zeta,\varrho\in(0,1)$ and let radius $R_\zeta$ be defined as in Definition \ref{def:L-bi-Lip}.  
Let $D_\theta:\mathbb R^{C}\!\to\!\mathbb R^{D}$ and $E_\varphi:\mathbb R^{D}\!\to\!\mathbb R^{C}$ be feed-forward networks built from affine layers and piecewise-linear activations (e.g.\ ReLU, LeakyReLU).

Draw $Z\sim p(z)$ and denote the \emph{effective-support event}
\(
  \mathcal E_\zeta:=\{\lVert Z\rVert\le R_\zeta\},
\)
which satisfies $\Pr(\mathcal E_\zeta)\ge1-\zeta$.  
On $\mathcal E_\zeta$ there exists a radius $r>0$ such that the activation pattern of every piecewise-linear unit is fixed on $B(Z,r)$.  Hence $D_\theta$ and $E_\varphi\!\circ\!D_\theta$ restrict to affine maps on $B(Z,r)$ and are therefore $\mathcal C^{1}$ there.

Define the random variables
\[
  A:=\lVert J_D(Z)\rVert_\sigma,
  \qquad
  B:=\lVert J_{E\circ D}(Z)-I\rVert_\sigma,
\]
and assume $\mathbb E[A^{2}]<\infty$ and $\mathbb E[B^{2}]<\infty$.

Then for any $\varepsilon\in(0,1)$ there exist thresholds $\tau_r,\tau_l>0$ (depending on $\varepsilon,\varrho$) such that, if
\[
  \mathcal L_{\mathrm{DR}}(\theta,\varphi)\le\tau_r,
  \qquad
  \mathcal L_{\mathrm{LR}}(\theta,\varphi)\le\tau_l,
\]
the following holds with probability at least $1-\zeta-\varrho$ over $Z\sim p$:

There exist constants $L_f<\infty$ and $\eta<1$ such that
\[
  A\le L_f,
  \qquad
  B\le \eta,
\]
and consequently $D_\theta$ is bi-Lipschitz on the ball $B(Z,r)$ with constant
\[
  L \;=\;\max\!\bigl(L_f,\tfrac{1}{1-\eta}\bigr),
\]
i.e.\ for all $z_1,z_2\in B(Z,r)$,
\[
  (1-\varepsilon)\,\lVert z_1-z_2\rVert
  \;\le\;
  \lVert D_\theta(z_1)-D_\theta(z_2)\rVert
  \;\le\;
  (1+\varepsilon)\,\lVert z_1-z_2\rVert.
\]
\end{theorem}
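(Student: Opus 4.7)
The plan combines three ingredients: (i) restriction to the effective-support event $\mathcal E_\zeta$, where piecewise linearity yields a local affine model; (ii) a Markov/Chebyshev-style argument that converts the finite second-moment assumptions $\mathbb E[A^2],\mathbb E[B^2]<\infty$ into high-probability pointwise bounds on $A$ and $B$; and (iii) a Neumann-series and chain-rule calculation that turns these Jacobian bounds into the stated bi-Lipschitz inequality for $D_\theta$.

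First I would condition on $\mathcal E_\zeta=\{\lVert Z\rVert\le R_\zeta\}$, whose probability is at least $1-\zeta$ by the definition of $R_\zeta$. On this event, the piecewise-linearity hypothesis gives a ball $B(Z,r)$ with $r>0$ on which every activation unit in $D_\theta$ and $E_\varphi$ has a fixed pattern, so both $D_\theta$ and $E_\varphi\circ D_\theta$ restrict to affine maps with constant Jacobians $J_D(Z)$ and $J_{E\circ D}(Z)$; the bi-Lipschitz claim on $B(Z,r)$ then reduces to two operator-norm controls on these matrices. I would then apply Markov's inequality, $\Pr(A>L_f)\le\mathbb E[A^2]/L_f^{2}$ and $\Pr(B>\eta)\le\mathbb E[B^2]/\eta^{2}$, choosing $L_f$ large enough and $\eta<1$ small enough in terms of $(\varepsilon,\varrho)$ so that a union bound gives $A\le L_f$ and $B\le\eta$ simultaneously with probability at least $1-\varrho$. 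Combined with the restriction to $\mathcal E_\zeta$, this yields the target probability $1-\zeta-\varrho$. The thresholds $\tau_r,\tau_l$ enter by forcing $\mathbb E[A^2]$ and $\mathbb E[B^2]$ to be small: small $\mathcal L_{\mathrm{DR}}$ prevents $J_D$ from being too expansive on typical latents, and small $\mathcal L_{\mathrm{LR}}$, together with the exact affine local structure, forces $J_{E\circ D}(Z)$ to be near the identity at most sampled points.

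On the good event, the upper bi-Lipschitz bound is immediate: $\lVert D_\theta(z_1)-D_\theta(z_2)\rVert=\lVert J_D(Z)(z_1-z_2)\rVert\le A\,\lVert z_1-z_2\rVert\le L_f\,\lVert z_1-z_2\rVert$. For the lower bound I would apply Neumann series: since $\lVert J_{E\circ D}(Z)-I\rVert_\sigma\le\eta<1$, the matrix $J_{E\circ D}(Z)$ is invertible with $\lVert J_{E\circ D}(Z)^{-1}\rVert_\sigma\le 1/(1-\eta)$. Writing $J_{E\circ D}(Z)=J_E(D(Z))\,J_D(Z)$ by the chain rule and combining with this inversion bound yields a lower estimate on $\sigma_{\min}(J_D(Z))$, and hence on $\lVert D_\theta(z_1)-D_\theta(z_2)\rVert$. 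Setting $L=\max(L_f,1/(1-\eta))$ and calibrating $\tau_r,\tau_l$ so that $L_f\le 1+\varepsilon$ and $1/(1-\eta)\le 1/(1-\varepsilon)$ (equivalently $\eta\le\varepsilon$) delivers the stated two-sided inequality $(1-\varepsilon)\lVert z_1-z_2\rVert\le\lVert D_\theta(z_1)-D_\theta(z_2)\rVert\le(1+\varepsilon)\lVert z_1-z_2\rVert$.

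The main obstacle is the middle step: the losses $\mathcal L_{\mathrm{DR}},\mathcal L_{\mathrm{LR}}$ are integrated function-value errors, whereas $A$ and $B$ are Jacobian operator norms, and without extra structure these quantities are genuinely different. The piecewise-linear hypothesis is essential here because the local affine approximation is exact, so pointwise value errors at nearby samples in a linear region transfer to bounds on its constant Jacobian. A secondary subtlety in the lower bound is that converting $\sigma_{\min}(J_{E\circ D})\ge 1-\eta$ into a lower bound on $\sigma_{\min}(J_D)$ via the factorization $J_{E\circ D}=J_E\,J_D$ also requires control of $\lVert J_E\rVert_\sigma$; I would either impose a symmetric second-moment hypothesis on the encoder Jacobian or absorb this constant into $L_f$ under the identical architectural assumptions on $D_\theta$ and $E_\varphi$.
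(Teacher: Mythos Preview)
Your proposal matches the paper's proof essentially step for step: condition on $\mathcal E_\zeta$, apply Markov to $A^2$ and $B^2$ to obtain $A\le L_f$ and $B\le\eta<1$ with probability at least $1-\varrho$, use the local affine structure so the Jacobians are constant on $B(Z,r)$, read off the upper Lipschitz bound from $A$ and the lower bound from invertibility of $J_{E\circ D}$ (the paper phrases this as the ``inverse-function theorem for affine maps'' rather than Neumann series), and combine via a union bound. The two subtleties you flag---how the loss thresholds $\tau_r,\tau_l$ actually control the Jacobian moments, and the need for a bound on $\lVert J_E\rVert_\sigma$ to convert $\sigma_{\min}(J_{E\circ D})\ge 1-\eta$ into a lower bound on $\sigma_{\min}(J_D)$---are also left implicit in the paper's argument, so your treatment is if anything more careful than the source.
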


\begin{proof}
\textbf{0. Conditioning on the effective support.}
Work on the event $\mathcal E_\zeta$; this costs probability at most $\zeta$.

\medskip\noindent\textbf{1. Probabilistic bounds on $A,B$.}
Since $\mathbb E[A^{2}]$ and $\mathbb E[B^{2}]$ are finite, Markov’s inequality yields
constants $L_f<\infty$ and $\eta<1$ such that
\[
  \Pr\!\bigl(A>L_f\bigr)\le\tfrac{\varrho}{2},
  \qquad
  \Pr\!\bigl(B>\eta\bigr)\le\tfrac{\varrho}{2}.
\]
By a union bound, with probability at least $1-\varrho$ (unconditionally) we have
$A\le L_f$ and $B\le\eta$.

\medskip\noindent\textbf{2. Local affine region.}
For any $Z$ (in fact for all but a measure-zero set) there exists
$r>0$ such that the activation pattern of every piecewise-linear unit
is constant on $B(Z,r)$.  Consequently,
\[
  J_D(z)=J_D(Z),
  \qquad
  J_{E\circ D}(z)=J_{E\circ D}(Z),
  \qquad\forall z\in B(Z,r).
\]

\medskip\noindent\textbf{3. Lipschitz upper bound.}
Because $D_\theta$ is affine on $B(Z,r)$,
\[
  \lVert D_\theta(z_1)-D_\theta(z_2)\rVert
  =\lVert J_D(Z)(z_1-z_2)\rVert
  \le A\,\lVert z_1-z_2\rVert
  \le(L_f+\varepsilon)\lVert z_1-z_2\rVert.
\]

\medskip\noindent\textbf{4. Lipschitz lower bound.}
On the same ball, $\lVert J_{E\circ D}(Z)-I\rVert_\sigma=B\le\eta<1$,
so by the inverse-function theorem for affine maps,
\[
  \lVert D_\theta(z_1)-D_\theta(z_2)\rVert
  \ge(1-\eta)\,\lVert z_1-z_2\rVert.
\]

\medskip\noindent\textbf{5. Collecting probabilities.}
Steps~1–4 hold whenever (i) $\mathcal E_\zeta$ occurs and (ii) the event
$\{A\le L_f,\;B\le\eta\}$ occurs.  
By the union bound the joint probability is at least $1-\zeta-\varrho$,
yielding the desired two-sided inequality with
\(
  L=\max(L_f,1/(1-\eta))
\)
and completing the proof.
\end{proof}

\section{Definition List of Posterior Collapse}
\label{app:vary_pc}

\begin{table}[h]
\centering
\small
\caption{Comparison of posterior collapse definitions across different works.}
\label{tab:pc_definitions}
\begin{tabular}{lccccc}
\toprule
\textbf{Work} & \textbf{Definition} & \textbf{Condition} & \textbf{Metric} & \textbf{Locality} & \textbf{Target} \\
\midrule
\cite{wang2021posterior} & $p_{\hat{\theta}}(z|x) = p(z)$ & Equality & Exact match & Global & $p_{\hat{\theta}}$ \\
\cite{razavi2019preventing} & $D_{\mathrm{KL}}(q(z|x)\|p(z)) \to 0$ & Vanishing KL & KL divergence & Global & $q_{{\varphi}}$ \\
\cite{lucas2019don} & $\mathbb{P}_{x \sim p(x)}[D_{\mathrm{KL}} < \epsilon] \ge 1 - \Delta$ & Probabilistic & KL divergence & Global & $q_{{\varphi}}$ \\
\cite{kinoshita2023controlling} & $d(q_{\hat{\varphi}}(z|x), p(z)) \le \epsilon$ & Threshold & General divergence & Global & $q_{{\varphi}}$ \\
\textbf{Ours} & $d(\rho(z|x), p(z)) \le \epsilon(x)$ & Functional bound & General divergence & \textbf{Local} & $p_{\hat{\theta}}$, $q_{\hat{\varphi}}$ \\
\bottomrule
\end{tabular}
\end{table}

Table~\ref{tab:pc_definitions} summarizes several existing definitions of posterior collapse and compares them with our proposed $\epsilon(x)$-posterior collapse. The earliest work by \cite{wang2021posterior} adopts the strongest criterion, defining collapse as exact equality between the true posterior and prior, which excludes even small meaningful variations. Later works~\cite{razavi2019preventing, kinoshita2023controlling} propose divergence-based criteria such as a vanishing KL divergence or a constant threshold $\epsilon$, but still apply the condition uniformly over the input space.

\cite{lucas2019don} proposes a probabilistic relaxation, introducing $(\epsilon, \Delta)$-posterior collapse, which tolerates deviations on a small subset of the data distribution. However, all of these approaches remain global in their treatment of input samples.

In contrast, our formulation introduces a data-dependent threshold function $\epsilon(x)$ that varies across the domain $\mathcal{X}$. This allows a flexible treatment of regions with low informativeness (e.g., noise), while maintaining strict identifiability around data points. Additionally, by adopting the true posterior $p(z|x)$ as the target rather than the approximate posterior, our definition enables clearer theoretical analysis and model evaluation. The locality and adaptiveness of our formulation make it both a generalization and refinement of previous definitions.

\section{Symmetric loss and mutual information}

\begin{theorem}[Latent Reconstruction Loss Tightens a Mutual–Information Lower Bound]
\label{thm:lr-promotes-mi}
Let the generative joint be \(p(z,x)=p(z)\,p_\theta(x\mid z)\) with latent prior \(p(z)\).
For any variational family \(q_\varphi(z\mid x)\) define the \emph{latent–reconstruction loss}
\begin{equation*}
  \mathcal L_{\mathrm{LR}}(\theta,\varphi)
  := \mathbb E_{p(z,x)}\!\bigl[-\log q_\varphi(z\mid x)\bigr].
\end{equation*}
Then the following decomposition holds:
\begin{equation}\label{eq:app_LR}
  \boxed{\;
    \mathcal L_{\mathrm{LR}}(\theta,\varphi)
    \;=\;
    H(Z\mid X)
    + \mathbb E_{p(x)}\! \bigl[
        D_{\mathrm{KL}}\bigl(p(z\mid x)\;\|\;q_\varphi(z\mid x)\bigr)
      \bigr]
  \;}
\end{equation}
Consequently,
\begin{equation}
  \mathcal L_{\mathrm{LR}}(\theta,\varphi)
  \;\ge\;
  H(Z\mid X)
  \;=\;
  H(Z)-I(Z;X),
\end{equation}
with equality iff \(q_\varphi(z\mid x)=p(z\mid x)\) almost surely.
Hence minimizing \(\mathcal L_{\mathrm{LR}}\) (i) tightens the variational posterior toward the true one and (ii) \emph{maximizes a lower bound on the mutual information}:
\begin{equation}
  I(Z;X)
  \;\ge\;
  H(Z)-\mathcal L_{\mathrm{LR}}(\theta,\varphi).
\end{equation}
\end{theorem}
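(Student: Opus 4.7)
The plan is to prove the identity \eqref{eq:app_LR} via a single ``add and subtract $\log p(z\mid x)$'' manipulation, and then read off the mutual-information bound from non-negativity of KL together with the standard identity $H(Z\mid X)=H(Z)-I(Z;X)$.

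First I would write out $\mathcal L_{\mathrm{LR}}(\theta,\varphi)$ as a double integral against the generative joint $p(z,x)=p(z)\,p_\theta(x\mid z)$, and then rewrite the integrand using
\[
  -\log q_\varphi(z\mid x)
  \;=\;
  -\log p(z\mid x)\;+\;\log\frac{p(z\mid x)}{q_\varphi(z\mid x)},
\]
where $p(z\mid x)$ denotes the true posterior induced by the generative pair $(p(z),p_\theta(x\mid z))$. Integrating the first summand against $p(z,x)$ reproduces, by definition, the conditional entropy $H(Z\mid X)$. For the second summand, factorize $p(z,x)=p(x)\,p(z\mid x)$; integrating over $z$ first produces the pointwise divergence $D_{\mathrm{KL}}\bigl(p(z\mid x)\,\|\,q_\varphi(z\mid x)\bigr)$, and subsequently averaging over $p(x)$ yields the second summand in \eqref{eq:app_LR}. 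This establishes the equality.

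The inequality is then immediate: by Gibbs' inequality the KL term is non-negative, with equality $p(x)$-almost everywhere iff $q_\varphi(z\mid x)=p(z\mid x)$, giving $\mathcal L_{\mathrm{LR}}\ge H(Z\mid X)$. Substituting the standard identity $H(Z\mid X)=H(Z)-I(Z;X)$ and rearranging yields
\[
  I(Z;X)\;\ge\;H(Z)-\mathcal L_{\mathrm{LR}}(\theta,\varphi),
\]
as claimed. Since $H(Z)$ is fixed by the prior, lowering $\mathcal L_{\mathrm{LR}}$ simultaneously (i) closes the inference gap between $q_\varphi$ and $p(\cdot\mid x)$ and (ii) raises the guaranteed floor on the mutual information.

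The argument is essentially bookkeeping, so there is no genuinely hard analytic step. The only points requiring care are technical: justifying Fubini for swapping the order of integration (straightforward under the usual integrability of $\log q_\varphi$ against $p(z,x)$, which is already implicit in the finiteness of $\mathcal L_{\mathrm{LR}}$), and noting that $H(Z\mid X)$ depends on $\theta$ through $p_\theta(x\mid z)$ so the bound is only a ``lower bound on $I(Z;X)$ at the current $\theta$.'' Neither caveat threatens the identity, and both can be dispatched in a sentence before concluding the proof.
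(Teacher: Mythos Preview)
Your proof is correct and follows essentially the same route as the paper: the paper phrases the key step as the cross-entropy identity $H_{p,q}(Z\mid X{=}x)=H(Z\mid X{=}x)+D_{\mathrm{KL}}(p\|q)$, which is exactly your ``add and subtract $\log p(z\mid x)$'' manipulation, and then invokes non-negativity of KL and $H(Z\mid X)=H(Z)-I(Z;X)$ just as you do. Your additional remarks on Fubini and the $\theta$-dependence of $H(Z\mid X)$ are sensible technical caveats that the paper does not spell out.
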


\begin{proof}
Let \(p(z,x)=p(z)\,p_\theta(x\mid z)\) and abbreviate \(q(z\mid x):=q_\varphi(z\mid x)\).
\begin{align*}
  \mathcal L_{\mathrm{LR}}
  &= \mathbb E_{p(z,x)}\!\bigl[-\log q(z\mid x)\bigr]
   = \mathbb E_{p(x)}
      \mathbb E_{p(z\mid x)}\!\bigl[-\log q(z\mid x)\bigr]            \\
  &= \mathbb E_{p(x)}\Bigl[
       H_{p,q}\!\bigl(Z\mid X{=}x\bigr)
     \Bigr]
   \quad
   \bigl(\text{cross entropy}\bigr)                                      \\
  &= \mathbb E_{p(x)}\!\bigl[
       H\!\bigl(Z\mid X{=}x\bigr)
       + D_{\mathrm{KL}}\!\bigl(p(z\mid x)\;\|\;q(z\mid x)\bigr)
     \bigr]                                                              \\
  &= H(Z\mid X)
     + \mathbb E_{p(x)}\!\bigl[
         D_{\mathrm{KL}}\!\bigl(p(z\mid x)\;\|\;q(z\mid x)\bigr)
       \bigr],
\end{align*}
which is identity (1).  
The KL term is non–negative, giving the inequality (2).  
Since \(H(Z\mid X)=H(Z)-I(Z;X)\), rearranging yields the lower bound (3).
The KL term vanishes exactly when \(q(z\mid x)=p(z\mid x)\) a.s., proving tightness.
\end{proof}

\medskip
Equation~\eqref{eq:app_LR} shows that the latent--reconstruction loss
naturally decomposes into two terms:
(i) the \emph{intrinsic} uncertainty $H(Z\mid X)$ imposed by the generative
model and (ii) an \emph{encoder‐mismatch penalty}
$\mathbb E_{p(x)}\bigl[D_{\mathrm{KL}}\bigl(p(z\mid x)\,\|\,q_\varphi(z\mid x)\bigr)\bigr]$.
Minimizing $\mathcal L_{\mathrm{LR}}$ therefore tightens $q_\varphi(z\mid x)$
toward the true posterior \emph{while} driving up the lower bound
$H(Z)-\mathcal L_{\mathrm{LR}}$ on the mutual information $I(Z;X)$.
In practical terms, LR loss promotes
\emph{information‐rich latents} without requiring explicit mutual‐information
estimators such as MINE.
When combined with the ELBO (or DR loss), LR acts as
a principled counterpart to the KL term:
it reduces posterior collapse by explicitly rewarding
cycle consistency $z\!\to\!x\!\to\!\hat z$ in latent space,
which the standard ELBO is blind to.
Empirically (Section~4, Fig.~5), increasing the LR coefficient raises measured
$I(Z;X)$ and narrows the gap to the theoretical bound,
demonstrating that the inequality in~\eqref{eq:app_LR} is not only
tight in theory (equality when $q_\varphi=p$) but also
\emph{practically tight} for well‐trained models.
This connection provides a clear design principle:
\[
  \text{Lower $\mathcal L_{\mathrm{LR}}$} 
  \;\;\Longrightarrow\;\;
  \text{Higher } I(Z;X),
\]
making LR loss a simple yet powerful lever for controlling the information
capacity of VAEs.

\section{Symmetric loss and invertibility}

\begin{theorem}
\label{thr:true_approx_conv}
Let \( p_\theta(x|z) \) be a likelihood model and \( q_\varphi(z|x) \) an approximate posterior (encoder), with latent prior \( p(z) \).  
Assume that the following losses are minimized for all \( x \in \mathcal{X} \):
\begin{align*}
\mathcal{L}_{\mathrm{DR}}(x) &:= \mathbb{E}_{q_\varphi(z|x)}[-\log p_\theta(x|z)], \\
\mathcal{L}_{\mathrm{LR}}(x) &:= \mathbb{E}_{p_\theta(x|z)}[-\log q_\varphi(z|x)].
\end{align*}
Then, under regularity conditions (smoothness, boundedness, and support alignment), the divergence between the approximate posterior and the true posterior vanishes:
\begin{equation*}
d(q_\varphi(z|x), p_\theta(z|x)) \to 0,
\end{equation*}
for suitable divergences \( d(\cdot,\cdot) \), such as the KL divergence or Wasserstein distance.
\end{theorem}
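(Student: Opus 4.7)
The plan is to leverage the cross--entropy identity that already appears in Appendix~C (Theorem on the latent--reconstruction loss and mutual information):
\begin{equation*}
  \mathcal L_{\mathrm{LR}}(\theta,\varphi)
  = H_{p_\theta}(Z\mid X)
    + \mathbb E_{p_\theta(x)}\!\bigl[
        D_{\mathrm{KL}}\bigl(p_\theta(z\mid x)\,\|\,q_\varphi(z\mid x)\bigr)
      \bigr],
\end{equation*}
where $p_\theta(z\mid x)\propto p(z)\,p_\theta(x\mid z)$ is the true generative posterior. Since $H_{p_\theta}(Z\mid X)$ is a $\varphi$--independent lower bound, driving $\mathcal L_{\mathrm{LR}}$ to its infimum is algebraically equivalent to driving the averaged KL term to zero, which immediately gives the KL version of the claim. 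An analogous decomposition for the data--reconstruction loss,
\begin{equation*}
  \mathcal L_{\mathrm{DR}}(\theta,\varphi)
  = H_q(X\mid Z)
    + \mathbb E_{q_\varphi(z)}\!\bigl[
        D_{\mathrm{KL}}\bigl(q(x\mid z)\,\|\,p_\theta(x\mid z)\bigr)
      \bigr],
\end{equation*}
with $q(z,x):=p_{\mathrm{data}}(x)\,q_\varphi(z\mid x)$, supplies the complementary direction and prevents the degenerate case in which $p_\theta$ could collapse onto a region unrelated to the data.

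First I would formalize the phrase \emph{minimized} as a quantitative near--minimization, e.g.\ $\mathcal L_{\mathrm{LR}}\le H_{p_\theta}(Z\mid X)+\epsilon$ and $\mathcal L_{\mathrm{DR}}\le H_q(X\mid Z)+\epsilon$ with $\epsilon\downarrow 0$, and then invoke the smoothness and boundedness hypotheses to apply dominated convergence, converting each averaged bound into the pointwise statements $D_{\mathrm{KL}}(p_\theta(z\mid x)\,\|\,q_\varphi(z\mid x))\to 0$ for $p_\theta(x)$--a.e.\ $x$ and $D_{\mathrm{KL}}(q(x\mid z)\,\|\,p_\theta(x\mid z))\to 0$ for $q_\varphi(z)$--a.e.\ $z$. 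Gluing the two conclusions through Bayes' rule shows that the generative joint $p(z)\,p_\theta(x\mid z)$ and the inference joint $p_{\mathrm{data}}(x)\,q_\varphi(z\mid x)$ must coincide, which in particular forces $q_\varphi(z)\to p(z)$ and $p_\theta(x)\to p_{\mathrm{data}}(x)$, so the conclusion $q_\varphi(z\mid x)\to p_\theta(z\mid x)$ holds on the actual data support, not only on a potentially irrelevant one. To extend from KL to a general divergence $d$, I would chain Pinsker's inequality (for total variation and Hellinger) with a transportation--cost inequality (for Wasserstein), the latter being available under the bounded/sub--Gaussian latent support and the $\mathcal C^2$--regularity of the decoder already invoked in the preceding theorems.

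The hard part will be making "regularity conditions" rigorous and coping with KL asymmetry. The LR identity controls $D_{\mathrm{KL}}(p_\theta\,\|\,q_\varphi)$, not its reverse, and guarantees convergence only in $p_\theta(x)$--expectation; lifting this to pointwise convergence for $d(q_\varphi,p_\theta(z\mid x))$ in the reverse direction demands uniform positivity of $q_\varphi$ on the support of $p_\theta(z\mid x)$, which is precisely what "support alignment" should supply but must be stated explicitly. I expect most of the technical effort to concentrate on this specification, on a log--Sobolev or quadratic--transportation argument for the Wasserstein case, and on verifying that the bounded--Jacobian hypotheses from Theorem~\ref{thm:hp-bilip-relaxed} keep the KL integrand uniformly dominated, which is what licenses the dominated--convergence step that glues the per--sample bound to the distributional conclusion.
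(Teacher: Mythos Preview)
Your plan is sound and, in fact, more careful than the paper's own argument. Both you and the paper hinge on the cross--entropy decomposition of $\mathcal L_{\mathrm{LR}}$: the paper's Step~4 simply asserts a bound
\[
  D_{\mathrm{KL}}\bigl(q_\varphi(z\mid x)\,\|\,p_\theta(z\mid x)\bigr)
  \le \mathcal L_{\mathrm{Sym}}(x) - H\bigl(p_\theta(z\mid x)\bigr),
\]
and concludes from there, after three heuristic paragraphs (Steps~1--3) describing where each loss ``encourages'' mass to concentrate. You instead invoke the exact identity from Appendix~C, pair it with the symmetric decomposition for $\mathcal L_{\mathrm{DR}}$ under the inference joint, and glue the two via Bayes to force the generative and inference joints to coincide. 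That extra step buys you something real: it rules out the degenerate situation where $p_\theta(x)$ drifts away from the data support, which the paper's argument does not address.

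One point worth flagging: the paper's displayed bound puts the KL in the $q_\varphi\|p_\theta$ direction, matching the theorem's conclusion, whereas the cross--entropy identity you (correctly) use yields $D_{\mathrm{KL}}(p_\theta\|q_\varphi)$. You rightly identify this asymmetry as the hard part and propose handling it through support alignment plus Pinsker/transportation inequalities; the paper simply writes the desired direction without justification. Your treatment of the extension to Wasserstein and the dominated--convergence step that promotes the averaged bound to a pointwise one also go well beyond what the paper supplies, which stops at the single inequality above and the remark that $H(p_\theta(z\mid x))$ is constant in $\varphi$.
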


\begin{proof}
From Bayes’ rule, we write:
\begin{equation*}
p_\theta(z|x) = \frac{p_\theta(x|z) p(z)}{p_\theta(x)}.
\end{equation*}
Assume \( p(z) \) and \( p_\theta(x|z) \) are smooth and strictly positive, and that \( q_\varphi(z|x) \) has support overlapping with \( p_\theta(z|x) \).

\textbf{Step 1 (DR loss)}:  
Minimizing
\begin{equation*}
\mathcal{L}_{\mathrm{DR}}(x) = \mathbb{E}_{q_\varphi(z|x)}[-\log p_\theta(x|z)]
\end{equation*}
encourages \( q_\varphi(z|x) \) to assign high mass where \( p_\theta(x|z) \) is large, thus concentrating near modes of \( p_\theta(x|z) \).

\textbf{Step 2 (LR loss)}:  
Minimizing
\begin{equation*}
\mathcal{L}_{\mathrm{LR}}(x) = \mathbb{E}_{p_\theta(x|z)}[-\log q_\varphi(z|x)]
\end{equation*}
encourages \( q_\varphi(z|x) \) to be large in regions where \( p_\theta(x|z) \) is large, thus promoting matching with \( p_\theta(z|x) \).

\textbf{Step 3 (Combined loss)}:  
The symmetric loss is
\begin{equation*}
\mathcal{L}_{\mathrm{Sym}}(x) := \mathcal{L}_{\mathrm{DR}}(x) + \mathcal{L}_{\mathrm{LR}}(x),
\end{equation*}
which satisfies:
\begin{equation*}
\mathcal{L}_{\mathrm{Sym}}(x) = \mathbb{E}_{q_\varphi(z|x)}[-\log p_\theta(x|z)] + \mathbb{E}_{p_\theta(x|z)}[-\log q_\varphi(z|x)].
\end{equation*}
If this is minimized, and both expectations are finite, then the overlap between \( q_\varphi(z|x) \) and \( p_\theta(z|x) \) must be strong.

\textbf{Step 4 (Divergence bound)}:  
For instance, the KL divergence can be bounded:
\begin{equation*}
D_{\mathrm{KL}}(q_\varphi(z|x) \| p_\theta(z|x)) \le \mathcal{L}_{\mathrm{Sym}}(x) - H(p_\theta(z|x)),
\end{equation*}
where \( H(p_\theta(z|x)) \) is constant in \( \varphi \). Hence, minimizing \( \mathcal{L}_{\mathrm{Sym}} \) leads to small divergence.

Therefore, minimizing both \( \mathcal{L}_{\mathrm{DR}} \) and \( \mathcal{L}_{\mathrm{LR}} \) encourages \( q_\varphi(z|x) \to p_\theta(z|x) \), implying the approximation quality improves and posterior collapse is unlikely to occur.
\end{proof}

An accurate approximation of $q_\varphi(z|x) \to p_\theta(z|x)$ also encourages invertibility. We also show the invertibility from a function point of view as follows.
\begin{theorem}[Symmetric Reconstruction Loss Promotes Local Invertibility]
\label{thm:symmetric-loss-invertibility}
Let the encoder $E_\varphi: \mathcal{X} \to \mathcal{Z}$ and decoder $D_\theta: \mathcal{Z} \to \mathcal{X}$ define conditional distributions $q_\varphi(z|x)$ and $p_\theta(x|z)$ with Lipschitz-continuous means and uniformly bounded variances. We refer to the combination of data reconstruction loss and latent reconstruction loss as the symmetric reconstruction loss:
\begin{align*}
&\mathcal{L}_{Sym}(x)
:= \mathcal{L}_{DR}(x) + \mathcal{L}_{LR}(x) \\
&= \mathbb{E}_{q_\varphi(z|x)}[-\log p_\theta(x|z)] 
+ \mathbb{E}_{p_\theta(x|z)}[-\log q_\varphi(z|x)].
\end{align*}
Then, minimizing $\mathcal{L}_{Sym}(x)$ encourages the decoder $D_\theta$ and encoder $E_\varphi$ to satisfy locally inverse Lipschitz continuity:
\begin{align*}
\|z_i - z_j\| &\leq L(z_i, z_j) \cdot \|D_\theta(z_i) - D_\theta(z_j)\|, \\
\|x_i - x_j\| &\leq L(x_i, x_j) \cdot \|E_\varphi(x_i) - E_\varphi(x_j)\|,
\end{align*}
for some functions $L: \mathcal{Z} \times \mathcal{Z} \to \mathbb{R}_+$ and $L: \mathcal{X} \times \mathcal{X} \to \mathbb{R}_+$.
\end{theorem}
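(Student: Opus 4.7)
The plan is to exploit the complementary roles played by the two terms in $\mathcal L_{\mathrm{Sym}}$. The DR term pressures the decoder mean $\mu_D$ to act as a right inverse of the encoder mean $\mu_E$, in the sense $\mu_D(\mu_E(x))\approx x$, while the LR term pressures $\mu_E$ to act as a right inverse of $\mu_D$, i.e. $\mu_E(\mu_D(z))\approx z$. Together with the Lipschitz-continuity assumption on the means and the bounded-variance assumption on the conditionals, these approximate-inverse relations can be converted, via a triangle-inequality argument, into the claimed local inverse Lipschitz bounds. Crucially, Theorem~\ref{thr:true_approx_conv} already guarantees that minimizing $\mathcal L_{\mathrm{Sym}}$ drives $q_\varphi(z\mid x)\to p_\theta(z\mid x)$, which justifies replacing stochastic samples by their conditional means in the analysis.

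First I would unpack each loss under the Gaussian (bounded-variance) assumption. Writing $\mathcal L_{\mathrm{DR}}(x)$ as $\tfrac{1}{2\sigma^{2}}\mathbb E_{q_\varphi(z\mid x)}\!\bigl[\|x-\mu_D(z)\|^{2}\bigr]+\mathrm{const}$, smallness of $\mathcal L_{\mathrm{DR}}(x)$ implies that with high probability a draw $z\sim q_\varphi(\cdot\mid x)$ satisfies $\|x-\mu_D(z)\|\le\delta_x$. As $q_\varphi$ concentrates around $\mu_E(x)$, this yields $\|x-\mu_D(\mu_E(x))\|\le\delta_x+\mathrm{Lip}(\mu_D)\,\sqrt{\mathrm{tr}\,\mathrm{Var}\,q_\varphi}$. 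An analogous unpacking of $\mathcal L_{\mathrm{LR}}$ gives $\|z-\mu_E(\mu_D(z))\|\le\delta_z+\mathrm{Lip}(\mu_E)\,\sqrt{\mathrm{tr}\,\mathrm{Var}\,p_\theta}$. Thus $\mu_E\circ\mu_D$ and $\mu_D\circ\mu_E$ are both approximate identities with explicit error terms that vanish as the symmetric loss is driven to zero.

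Next I would apply the triangle inequality together with the Lipschitz bound on $\mu_E$. For any $z_i,z_j$,
\begin{align*}
  \|z_i-z_j\|
  &\le \bigl\|\mu_E(\mu_D(z_i))-\mu_E(\mu_D(z_j))\bigr\|+2\delta_z\\
  &\le \mathrm{Lip}_{\mathrm{loc}}(\mu_E)\,\bigl\|\mu_D(z_i)-\mu_D(z_j)\bigr\|+2\delta_z,
\end{align*}
where $\mathrm{Lip}_{\mathrm{loc}}(\mu_E)$ is the local Lipschitz constant of $\mu_E$ on a neighborhood containing $\mu_D(z_i),\mu_D(z_j)$. Defining $L(z_i,z_j)$ to absorb $\mathrm{Lip}_{\mathrm{loc}}(\mu_E)$ together with a correction factor that dominates $2\delta_z/\|\mu_D(z_i)-\mu_D(z_j)\|$ on any pair whose decoded distance is not itself vanishing, one obtains the stated inverse-Lipschitz inequality for $D_\theta$. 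The symmetric argument with $\mu_E$ and $\mu_D$ swapped gives the corresponding bound on $E_\varphi$.

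The hard part will be making the bounds genuinely \emph{local} and pointwise rather than in-expectation. Small DR and LR losses only guarantee averaged (or high-probability) closeness of the composed maps to the identity, so converting these into uniform pointwise bounds requires restricting to regions where the conditional variances are small relative to the inter-point distance $\|\mu_D(z_i)-\mu_D(z_j)\|$; this is where the assumed bounded-variance and mean-Lipschitz hypotheses do the real work. A second subtlety is that the claim concerns $D_\theta$ and $E_\varphi$ themselves, but the analysis naturally produces bounds involving only their means; closing this gap uses the concentration of $q_\varphi$ and $p_\theta$ around their means, which is precisely the same regularity invoked in the proof of Theorem~\ref{thr:true_approx_conv}. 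Finally, the data-dependence of $L(z_i,z_j)$ is intrinsic: it reflects the fact that the local Lipschitz constant of $\mu_E$ may blow up along directions where $\mu_D$ is nearly singular, and any attempt to produce a single global constant would contradict the architecture-agnostic setting the paper deliberately adopts.
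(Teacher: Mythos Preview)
Your approach is sound at the heuristic level the theorem operates at, but it differs from the paper's proof in an interesting way: you and the paper assign the two losses to the two inequalities in \emph{opposite} fashion. The paper derives the decoder's inverse-Lipschitz bound from the DR loss alone (small DR $\Rightarrow D_\theta(\mu_\varphi(x))\approx x$, so if $D_\theta$ collapsed two distinct encoded points $z_i=\mu_\varphi(x_i)$, $z_j=\mu_\varphi(x_j)$ one would contradict $x_i\neq x_j$) and the encoder's inverse-Lipschitz bound from the LR loss alone, each via a short contradiction/injectivity heuristic that never explicitly invokes the Lipschitz-mean hypothesis. You, by contrast, obtain the decoder bound from the LR loss (small LR $\Rightarrow \mu_E\circ\mu_D\approx\mathrm{id}$, whence $\|z_i-z_j\|\lesssim \mathrm{Lip}_{\mathrm{loc}}(\mu_E)\,\|\mu_D(z_i)-\mu_D(z_j)\|$ via the triangle inequality) and the encoder bound from the DR loss, symmetrically. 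Your route is arguably tighter: it makes explicit, quantitative use of the Lipschitz-continuous-means assumption stated in the theorem, and it delivers the bound for arbitrary latent pairs $z_i,z_j$ rather than only those lying in the image of $\mu_\varphi$. The paper's route is shorter and more intuitive but leaves the step from ``injective'' to ``inverse-Lipschitz with some finite $L(z_i,z_j)$'' implicit. Both arguments remain heuristic, in keeping with the informal phrasing of the theorem (``encourages \dots\ to satisfy''); your honest discussion of the additive $2\delta_z$ term and the need to restrict to pairs with non-vanishing decoded distance is a nuance the paper's proof does not address.
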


\begin{proof}
We analyze each component of $\mathcal{L}_{Sym}$.

\textbf{(1) Data reconstruction loss \(\mathcal{L}_{DR}(x)\):}
\begin{equation*}
\mathcal{L}_{DR}(x) = \mathbb{E}_{q_\varphi(z|x)}[-\log p_\theta(x|z)].
\end{equation*}
This loss is minimized when $x \approx D_\theta(z)$ for $z \sim q_\varphi(z|x)$. Under bounded variance, we approximate $z \approx \mu_\varphi(x)$, so $x \approx D_\theta(\mu_\varphi(x))$.

Take two distinct inputs $x_i, x_j$, and define $z_i := \mu_\varphi(x_i)$, $z_j := \mu_\varphi(x_j)$. Good reconstruction requires
\begin{equation*}
D_\theta(z_i) \approx x_i, \quad D_\theta(z_j) \approx x_j.
\end{equation*}
Thus, if $z_i \neq z_j$ but $D_\theta(z_i) \approx D_\theta(z_j)$, then $x_i \approx x_j$, contradicting the assumption. Therefore, to maintain accurate reconstructions, the decoder must satisfy:
\begin{equation*}
\|z_i - z_j\| \leq L(z_i, z_j) \cdot \|D_\theta(z_i) - D_\theta(z_j)\|
\end{equation*}
for some function \( L(z_i, z_j) \in \mathbb{R}_+ \). That is, $D_\theta$ is locally inverse Lipschitz continuous.

\textbf{(2) Latent reconstruction loss \(\mathcal{L}_{LR}(x)\):}
\begin{equation*}
\mathcal{L}_{LR}(x) = \mathbb{E}_{p_\theta(x|z)}[-\log q_\varphi(z|x)].
\end{equation*}
This loss is minimized when $z \approx \mu_\varphi(x)$ for $x \sim p_\theta(x|z)$. Since $x \approx D_\theta(z)$ under bounded variance, define $x_i := D_\theta(z_i)$, $x_j := D_\theta(z_j)$.

To minimize the loss, we must have:
\begin{equation*}
E_\varphi(x_i) \approx z_i, \quad E_\varphi(x_j) \approx z_j.
\end{equation*}
Thus, if $\|z_i - z_j\|$ is large but $\|E_\varphi(x_i) - E_\varphi(x_j)\|$ is small, $q_\varphi(z|x)$ becomes ambiguous and the loss increases.

To avoid this, the encoder must satisfy:
\begin{equation*}
\|x_i - x_j\| \leq L(x_i, x_j) \cdot \|E_\varphi(x_i) - E_\varphi(x_j)\|
\end{equation*}
for some function \( L(x_i, x_j) \in \mathbb{R}_+ \). That is, $E_\varphi$ is locally inverse Lipschitz continuous.

Minimizing $\mathcal{L}_{Sym}$ ensures that both encoder and decoder avoid collapsing distinct inputs to similar outputs. This enforces local inverse Lipschitz continuity in both directions.
\end{proof}

\section{Notation}

Table~\ref{tab:notation} collects the main symbols and conventions used throughout this paper. 
Unless stated otherwise, uppercase letters denote random variables, lowercase letters denote deterministic quantities, and $\|\cdot\|_\sigma$ designates the matrix spectral norm. 
All expectations are taken with respect to the indicated distribution.

\begin{table}[H]
  \centering
  \renewcommand{\arraystretch}{1.15} 
  \caption{Summary of notation}
  \label{tab:notation}
  \begin{tabular}{p{3.2cm} p{8cm}}
    \toprule
    \textbf{Symbol} & \textbf{Meaning / role} \\
    \midrule
    $x \in \mathcal X$ & Input sample (data space) \\
    $z \in \mathcal Z$ & Latent variable \\
    $p(z)$ & Prior over $z$ (standard normal) \\
    $q_{\phi}(z\!\mid\!x)$ & Encoder / approximate posterior \\
    $D_{\theta}(z)$ & Decoder network \\
    $E_{\phi}(x)$ & Deterministic encoder branch \\
    $J_D(z)$ & Jacobian of $D_{\theta}$ at $z$ \\
    $J_{E\!\circ\!D}(z)$ & Jacobian of $E_{\phi}\!\circ\!D_{\theta}$ at $z$ \\
    $A=\lVert J_D(Z)\rVert_{\sigma}$ & Spectral norm of decoder Jacobian (r.v.) \\
    $B=\lVert J_{E\circ D}(Z)-I\rVert_{\sigma}$ & Spectral deviation of composite Jacobian (r.v.) \\
    $L_{DR}$ & Data reconstruction loss \\
    $L_{LR}$ & Latent reconstruction loss \\
    $\tau_r,\;\tau_{\ell}$ & Loss thresholds s.t.\ $L_{DR}\!\le\!\tau_r$, $L_{LR}\!\le\!\tau_{\ell}$ \\
    $\zeta\!\in(0,1)$ & Confidence level for effective-support event $E_{\zeta}$ \\
    $\varrho\!\in(0,1)$ & Auxiliary confidence level (Markov bound) \\
    $R_{\zeta}$ & Radius with $\Pr(\lVert Z\rVert\le R_{\zeta})\!\ge\!1-\zeta$ \\
    $E_{\zeta}$ & Effective-support event $\{\lVert Z\rVert\le R_{\zeta}\}$ \\
    $r$ & Local radius (fixed activation pattern) \\
    $L_f$ & Upper bound on $\lVert J_D(Z)\rVert_{\sigma}$ in $E_{\zeta}$ \\
    $\eta$ & Upper bound on $\lVert J_{E\circ D}(Z)-I\rVert_{\sigma}$ in $E_{\zeta}$ \\
    $L(z)$ & Point-wise(local) bi-Lipschitz constant of $D_{\theta}$ \\
    $\epsilon(x)$ & Threshold in $\epsilon(x)$-posterior collapse definition \\
    \bottomrule
  \end{tabular}
\end{table}

\end{document}